\newtheorem{theorem}{Theorem}[section]
\newtheorem{lemma}[theorem]{Lemma}
\newtheorem{assumption}{Assumption}[section]
\newtheorem{example}{Example}[section]
\newtheorem{remark}{Remark}[section]
\newtheorem{property}{Property}[section]
\DeclareMathOperator{\E}{\mathbb{E}}
\let\Pr\relax\DeclareMathOperator{\Pr}{\mathbb{P}}
\DeclareMathOperator{\Proj}{\text{Proj}}
\DeclareMathOperator*{\argmax}{arg\,max}
\DeclareMathOperator*{\argmin}{arg\,min}
\DeclareMathOperator{\Tr}{Tr}
\title{Stochastic Linear Bandits with Protected Subspace}
\author[1]{Advait Parulekar}
\author[2]{Soumya Basu}
\author[3]{Aditya Gopalan}
\author[4]{Karthikeyan Shanmugam}
\author[1]{Sanjay Shakkottai}
\affil[1]{Department of ECE, The University of Texas at Austin}
\affil[2]{Google}
\affil[3]{Department of ECE, Indian Institute of Science}
\affil[4]{IBM AI Research}
\begin{document}

\maketitle
\setlength\parindent{0pt}
\begin{abstract}
  We study a variant of the stochastic linear bandit problem wherein we optimize a linear objective function but rewards are accrued only orthogonal to an unknown subspace (which we interpret as a \textit{protected space}) given only zero-order stochastic oracle access to both the objective itself and protected subspace. In particular, at each round, the learner must choose whether to query the objective or the protected subspace alongside choosing an action. Our algorithm, derived from the OFUL principle, uses some of the queries to get an estimate of the protected space, and (in almost all rounds) plays optimistically with respect to a confidence set for this space. We provide a $\tilde{O}(sd\sqrt{T})$ regret upper bound in the case where the action space is the complete unit ball in $\mathbb{R}^d$, $s < d$ is the dimension of the protected subspace, and $T$ is the time horizon. Moreover, we demonstrate that a discrete action space can lead to linear regret with an optimistic algorithm, reinforcing the sub-optimality of optimism in certain settings. We also show that protection constraints imply that no consistent algorithm can have a minimax regret smaller than $\Omega(T^{3/4}).$ We empirically validate our results with synthetic and real datasets.
  
\end{abstract}

\section{INTRODUCTION}
 Consider the task of treating a disease characterized by some outlying biological marker. Often the medication necessary for treatment causes adverse side effects on other biological functionalities. During treatment, it is important to monitor such undesirable side effects by conducting various medical tests, while augmenting it with other medications to alleviate these adverse effects, and jointly calibrating the dosage of all these medications. Conducting tests may be expensive, thus it is desirable to find a treatment that has no side effects with efficient tests to optimally affect only the desired bio-marker. Such concerns are widespread in the treatment of disease - patients often receive multiple medications and the mitigation of drug related problems is a common concern, especially in the presence of comorbidities \cite{Hasniza2020Diabetes}. Optimal blood pressure control, for instance, is described as a challenge in the treatment of type 2 diabetes \cite{Hasniza2020Diabetes}, and antipsychotics prescribed for schizophrenia can result in 
 side effects such as obesity, dyslipidemia and type 2 diabetes \cite{mackenzie2018Schizophrenia}. Combination therapy (where a variety of medications are jointly prescribed) is often used to reduce the impact of adverse effects \cite{garcia2018combination}. Our work abstractly considers the problem of finding an optimal combination therapy guided by sequential 
 testing during the course of a patient's treatment to ensure recovery with the least cumulative side effects.

We approach this problem as an online decision making problem in which the results of various tests of bio-markers are regarded as bi-linear functions of treatments and patient characteristics. At each round the physician may take an action (specify a therapy, dosages, schedules, etc.) $A_t$ chosen from some given set  $\mathcal{A}_t\subset \mathbb{R}^d$. The physician has access to a test to monitor the result of the therapy, the result of which is given by $X_t = \langle \theta_0, A_t\rangle+\eta_t$ with $\eta_t$ representing some sub-gaussian noise, for some $\theta_0\in \mathbb{R}^d$. There may be other tests which should not be affected by the therapy (these test for side effects). Such tests are represented by $\theta_i\in \mathbb{R}^d, i\in [L]$, and their outcomes are similarly sub-gaussian with mean $\langle \theta_i, A_t\rangle$. In this setting, while the expected \textit{feedback} from an action $A_t$ is given by $X_t$ above, the learners \textit{reward} depends only on the components orthogonal to the protected space. That is, the reward given by $\langle a_t, \theta_\perp\rangle$ where $\theta_\perp$ is the component of $\theta_0$ orthogonal to $\{\theta_i\}_{i\in [L]}$ is unseen. In some sense, this is the component of the therapy that does not contribute to side-effects. 

The objective is to minimize \textit{pseudo-regret}, which is the total difference (that is, summed over all the rounds) between the \textit{expected} reward obtained by a genie who knows the means of the outcomes of every test exactly for each therapy, and the learner. 

Surprisingly, despite a similarity to the standard stochastic linear bandit problem, the partial information model makes this problem considerably more difficult. A key property that is used to upper bound regret in the linear bandit model is that under linear transformations, subgaussian random variables remain subgaussian. Speaking broadly, this allows us to use Hoeffding-style bounds to get confidence sets for unknown parameters under subgaussian assumptions of the noise. However our rewards are not linear functions of the unknown parameters, thus we require additional techniques to propagate estimates on confidence sets as samples are adaptively acquired over time.

\subsection{Contributions}
\label{section:contrib}

\begin{itemize}
    \item[1.] \textbf{Model:} We introduce the protected linear bandit as a model for online decision making with incomplete bandit feedback in which some subspace is considered to be protected, meaning that projections onto that space are subtracted from our reward. The optimal action is thus not the one that aligns most with the target vector, but rather the one which aligns most with the component of the target vector orthogonal to the unknown protected subspace. It is important to note that we do not have direct access to these projections, but only to inner products with some fixed set of individual vectors in the subspace.
    
    \item[2.] \textbf{Algorithm and Regret Upper Bounds:} We propose an algorithm (Algorithm \ref{algorithm:protectedLUCB}) for the above and derive an upper bound for its regret that grows as $\tilde{O}(sd\sqrt{T})$ in the number of rounds $T$, similar to the best possible linear bandit regret for the case when the action space is the unit ball. The algorithm consists of two parts. First we remove redundancy in the set of protected vectors with a uniform exploration phase. We then restrict our attention to this independent set of constraints and play optimistically using an \textit{upper confidence bound} based algorithm. Typically in OFUL, confidence regions are maintained around the unknown parameters of interest. However in this case, the projection operator on the protected space is the object of interest, and it is unclear what it even means to have a confidence region for this object. To circumvent this, we only maintain individual confidence ellipsoids around an observed basis that spans the protected subspace. 
    Translating sub-gaussian tail based confidence regions on the basis vectors to an appropriate confidence region on the projection operator involves a non-linear transformation and hence destroys sub gaussianity, so instead we directly construct a confidence interval on the reward {\em only} for the optimistic action (see Section \ref{section:key_difficulties}).


    \item[3.] \textbf{Regret Lower Bounds:} This new model comes with an interesting difficulty: even if we play an action infinitely many times, observing any number of noisy inner products with all of the protected vectors, we may not be able to find a good high probability confidence interval for the reward from that action. For general action spaces, Example (\ref{example:linear_regret}) shows an instance where naive optimism can lead to linear regret with this partial feedback model, and in Section \ref{section:lower_bound} we show a $\Omega (T^{\frac{3}{4}})$ regret lower bound for any algorithm on a finite (time-varying) action space.
\end{itemize}

\textbf{Notation}
We will denote by $\Proj_{\{\theta_i\}_{i\in [L]}}$ the projection operator onto the space spanned by $\{\theta_i\}_{i\in [L]}$ and by $\Proj_{\{\theta_i\}_{i\in [L]}}^{\perp}$ the projection onto the orthogonal subspace. We use $[L]=\{1, 2, \cdots, L\}$ to denote the set of the first $L$ integers. We use $||x||_V$ to refer to the weighted norm $\sqrt{x^\top Vx}$. Given a matrix $P\in \mathbb{R}^{d\times L}$ (respectively, vector $x\in \mathbb{R}^{L}$) and a set $S\subseteq [L]$, we denote by $P_S\in \mathbb{R}^{d\times |S|}$ (respectively, $x_S$), the submatrix (vector) whose columns are the ones in $P$ ($x$) indexed by $S$. We denote the minimum non-zero eigenvalue by $\lambda_{\min}(\cdot)$. We denote by $\mathcal{B}_2^d$ the unit $2$-norm ball in $\mathbb{R}^d$. A notation table is given in Appendix~\ref{appendix:notation}.

\section{RELATED WORK}\label{section:related_work}
Multi armed bandits have been studied for decades, at least since \cite{Robbins1952Bandits}, and optimism in the face of uncertainty (OFUL) has proved to be an effective strategy in low regret algorithm design~\cite{Auer2002MAB,yadkori2011OFUL}. We point an interested reader to \cite{LatSze20} and references therein for works which are not directly related to ours.

Linear Bandits, where observable rewards are generated as the noisy inner products of actions and a hidden vector,  were analyzed in \cite{dani2008linbandits, yadkori2011OFUL} and the regret of an optimistic algorithm was shown to grow as $O(\sqrt{T}\log T)$ depending only on the dimension of the representation, \textit{independently} of the number of arms. In our model, additional to the hidden vector (as in linear bandits) we have a hidden protected space (spanned by multiple hidden constraint vectors). Our reward is the inner product of action and the component of the hidden vector orthogonal to the hidden protected space. Further, we do not observe the reward directly, instead we are allowed to make partial queries which, for diverse enough action space, can be used to infer the optimal action. Bandits with indirect access to rewards are studied under partial monitoring with finite~\cite{bartok2014PM, lattimore2019cleaning}, and infinite~\cite{bartok2014PM} action spaces. However, inferring optimal action in our model requires use of additional structure which absent in \cite{bartok2014PM}. 

From a motivational standpoint, we share similarities with linear bandit with safety constraints where a learner is required to be {\em safe}. In \cite{amani2019safety}, the authors study a linear bandit with {\em known linear constraints} where the actions should not violate these constraints. They propose an optimistic algorithm with initial safe exploration. This setting has been studied extensively, through the design of Thompson sampling based techniques~\cite{moradipari2020safe}, and extension to safe generalized linear bandits~\cite{amani2020generalized}, safe contextual bandits~\cite{daulton2019thompson}, and safe reinforcement learning~\cite{hasanzadezonuzy2020learning}. In a different model, \cite{kazerouni2017conservative} studied online learning where regret is constrained to be small compared to a {\em known baseline}. We differ from these works technically, as the constraints are unknown to us, unlike the above works. Further, we consider the protected space as reward shaping parameters, rather than {\em hard} constraints.  Additionally, in the probabilistically approximately correct (PAC) learning framework, safety constrained optimization with unknown constraints and objectives with access to zeroth order oracles is studied in another line of work  \cite{usmanova2019log, usmanova2019safe,fereydounian2020safe}. However, the convergence results in PAC-learning framework do not translate into regret minimization directly, as the former do not consider balancing exploration and exploitation.
We expand on the connections with Linear Bandits, Safety Constrained Linear Bandits, and Partial monitoring in Section~\ref{section:model_comparison}.

\section{MODEL}\label{section:model}
We consider a game between a player and a stochastic environment in which we have query access to $L+1$ unknown vectors $\theta_0, \theta_1, \cdots, \theta_L \in \mathbb{R}^d$ with $||\theta_i||_2\le M$ for all $i$. The vectors $\theta_1, \theta_2,\cdots \theta_L$, the \textit{protected} vectors, are provided such that they span the protected subspace. In the context of our motivating problem, these represent low dimensional linear embeddings of the various tests for the biomarkers associated with side-effects. We are given a large number, $L$, of them, however they may represent a lower dimensional protected subspace of $\mathbb{R}^d$. 
We refer to $\theta_0$ as the target vector. We would like to play arms that align as well as possible with $\theta_\perp = \Proj_{\{\theta_i\}_{i\in [L]}}^{\perp}\theta_0$, that is, the orthogonal projection of $\theta_0$ onto the subspace orthogonal to the protected subspace. In the absence of the protected vectors, this would just be a stochastic linear bandit problem parameterized by $\theta_0$.

At every round $t$, the player can choose any action $A_t\in \mathcal{A}_t$, and an index $I_t\in \{0\}\cup [L]$, and receive a corresponding feedback of $X_t = \langle A_t, \theta_{I_t}\rangle +\eta_t$ where $\eta_t$ is a conditionally $R$-subgaussian zero-mean noise.

\textbf{Regret: }The sub-optimality of action $a\in \mathcal{A}_t$, $\Delta_a$, is given by $$\Delta_a = \langle a^*_t-A, \Proj_{\{\theta_i\}_{i\in [L]}}^{\perp}\theta_0\rangle$$ where $$a^*_t=\argmax_{a\in \mathcal{A}_t} \langle a, \Proj_{\{\theta_i\}_{i\in [L]}}^{\perp}\theta_0\rangle$$ is the optimal action. The goal is to minimize pseudo-regret with respect to a genie who is aware of the true vectors $\{\theta_i\}_{i\in \{0\}\cup [L]}$ (and so would play $a^*_t$ at each round):
$$\mathcal{R}_{[T]} = \sum_{t\in [T]}\Delta_{A_t} = \sum_{t\in [T]} \langle a^*_t-A_t, \Proj_{\{\theta_i\}_{i\in [L]}}^{\perp}\theta_0\rangle.$$

\textbf{Assumptions: }We now discuss the assumptions we make and motivations for them.
\begin{assumption}\label{assumption:action_space}
The action space $\mathcal{A}_t$ at all times consists of all vectors with unit norm, i.e. $\mathcal{A}_t =\mathcal{B}^2_d$  . 
\end{assumption}
This assumption is helpful due to the nature of the reward function. Finite action spaces with optimistic algorithms can sometimes lead to problems such as the one in Example \ref{example:linear_regret}. In fact, we show in Section \ref{section:lower_bound} that a particularly bad action space \textit{must} result in $\Omega(T^{\frac{3}{4}})$ regret for a consistent algorithm. What we really require is that any vector we desire to play as an action be available to us in the actions space. In the setting in which an action corresponds to a therapy (as in the example of the introduction), this just means that the physician is able to decide upon a therapy rather than prescribe one from a predetermined set.

Because of Assumption \ref{assumption:action_space}, the optimal action $a^*_t$ is just  $\frac{\Proj_{\{\theta_i\}_{i\in [L]}}^{\perp}\theta_0}{||\Proj_{\{\theta_i\}_{i\in [L]}}^{\perp}\theta_0||_2}$ at all times. This is the normalized projection of $\theta_0$ onto the space orthogonal to the protected subspace.
\begin{assumption}
There exists a subset $S\in [L]$ of size $|S| = s$ such that $\lambda_{\min}(\sum_{i\in {S}}\theta_i\theta_i^\top )>0$, while any larger set $S'$ has $\lambda_{\min}(\sum_{i\in {S'}}\theta_i\theta_i^\top )=0$. We assume knowledge of $s$.
\end{assumption}
This says that there is a $s$ dimensional subspace that contains all of the protected vectors. Our regret bounds will be in terms of $s$ rather than $L$. We denote the greatest such $\lambda_{\min}(\sum_{i\in {S}}\theta_i\theta_i^\top )$ (over all choices of $S$ with $|S|=s$) simply as $\lambda_{\min}$. This corresponds to the best spanning set of protected vectors. If we instead know $\lambda_{\min}(\sum_{i\in {S}}\theta_i\theta_i^\top )$, we can remove this assumption and use Algorithm \ref{algorithm:CORE-SET_alternative} instead of Algorithm \ref{algorithm:CORE-SET}. This alternative is discussed in Appendix \ref{appendix:unknown_s}.

Let $\mathcal{F}_t = \sigma(A_1, A_2,\cdots, A_t, \eta_1,\eta_2,\cdots, \eta_t)$ denote the $\sigma$-algebra generated by all actions and noises up to and including time $t$. 
\begin{assumption}\label{assumption:noise}
The noise on the observed feedback, $\eta_t$, is conditionally zero-mean $R$-subgaussian, meaning $\E[\eta_t|\mathcal{F}_{t-1}] = 0$ and $\E[e^{\lambda\eta_t}|\mathcal{F}_{t-1}] \le e^{\frac{1}{2}\lambda^2R^2}$.
\end{assumption}
This is standard, and used for deriving concentrations for the confidence sets for the unknown parameters.

\subsection{Differences from Related Models}\label{section:model_comparison}

\textbf{Linear Bandits: }
The standard linear bandit problem considers minimizing regret while learning a single unknown vector \cite{dani2008linbandits}, \cite{yadkori2011OFUL} {\em without} other protected directions. In our setting, the regret depends on several unknown vectors; however, in each round, we only get a signal from one. As such, the noisy feedback that are observed from the player’s actions $\{X_s\}_{s\in [T]}$ do not immediately give us the sub-optimality of an action. When a player plays action $(A_t, I_t)$, it observes $X_t = \langle A_t, \theta_{I_t}\rangle+\eta_t$ and incurs regret $\langle a^*_t-A_t, \Proj_{\{\theta_i\}_{i\in [L]}}^{\perp}\theta_0\rangle$. In particular, the player does \textit{not} see a noisy version of $\langle A_t, \Proj_{\{\theta_i\}_{i\in [L]}}^{\perp}\theta_0\rangle$. Aside from choosing the arm to pull, a player must also choose which vector to query with that arm. The analysis is further obscured by the fact that the rewards are a non-linear function of the unknown parameters. Finally, letting the set of protected vectors be  empty  ($L = 0$) recovers the standard linear bandit, so our setting is a generalization.

\textbf{Safety-constrained Linear Bandits: }
Safety-constrained bandits, studied, for instance, in \cite{kazerouni2017conservative}, \cite{amani2019safety}, are typically supposed to guarantee a safety constraint with high probability at each round. For instance, \cite{kazerouni2017conservative} require that the cumulative regret of a learner not exceed the regret of a baseline learner by more than a small multiplicative factor. \cite{amani2019safety} have a safety constraint that is a geometric constraint on the arms that can be played at each round. Aside from maximizing the cumulative regret against $a_t^\top \theta_0$, they have a known matrix $B$ and known constant $c$ such that the arm they pull at each round $a_t$ must satisfy $a_t^\top B\theta_0<c$ with high probability for some safety threshold $t$. Both of these are essentially constraints on the exploration of a learner. In contrast, we do not enforce any explicit exploration constraint. Rather, the difficulty of our problem is to \textit{learn} the safety constraints simultaneously with the objective. Moreover, the aforementioned works (i) typically consider a single safety constraint as opposed to multiple, unknown directions $\{\theta_i\}_{i \in L}$, and (ii) they crucially assume `free' access to an observation of the constraint violation at each action round, leading to very rapid learning of the linear constraint halfspace; in our setting, the exploration of the constraint/protection is partial (learn about one of the $\theta_i$) and has to be adaptively decided. 

\textbf{Linear Partial Monitoring: }
 A reduction to the linear partial monitoring framework in~\cite{kirschner2020information}, although possible, results in linear regret with existing guarantees. ~\cite{kirschner2020information} provide a regret spectrum based on how informative the action space is, and derive a linear minimax bound for regret on games that are not \textit{globally observable}. The following is a reduction to the linear partial monitoring setting. 
 
 Let $\theta_\perp=\Proj_{\{\theta_i\}_{i\in [L]}}^{\perp}\theta_0$. We may take $\theta = e_{L+1}\otimes \theta_\perp + \sum_{i=0}^L e_i\otimes \theta_i$. An action $(i, a)\in [L]\times \mathcal{A}$, is encoded as $\textbf{e}_{L+1}\otimes a$, while $A_{(i, a)}$ is taken to be $e_i\otimes a$. The partial monitoring game described here is not globally observable, hence gives linear regret, since for all $a_1, a_2\in \mathcal{A}$, we have $\textbf{e}_{L+1}\otimes (a_1-a_2) \not\in \text{Span}_{i\in [L], a\in \mathcal{A}}A_{(i, a)}$. Here $\otimes$ refers to the Kronecker product.
 
To overcome this difficulty, we leverage crucially the structure in $\theta$ 
(specific to our problem), that the first $d$ coordinates of $\theta$ are actually a known function of the last $(L+1)d$ coordinates (a projection). 
 
\section{PROTECTED LIN-UCB}\label{section:confidence_sets}
In this section, we present an algorithm for the regret minimization problem described in Section~\ref{section:model}.
Our algorithm, Algorithm~\ref{algorithm:protectedLUCB},  is developed following the Optimization in the Face of Uncertainty (OFU) principle  \cite{yadkori2011OFUL}, where we play optimistic actions that maximize the reward with high probability. For that purpose, we maintain and continually refine respective high probability confidence sets for a subset of protected vectors that spans the protected space, namely the {\em core set}. As the dimension of the protected space is assumed to be known to be $s$, it is possible to find a set of $s$ protected vectors that span the space, and any additional vectors need not be considered. In the first phase of the algorithm, we use Algorithm~\ref{algorithm:CORE-SET} to reduce the number of relevant unknown vectors in an approximately optimal way.

The method described in \cite{yadkori2011OFUL} to construct confidence sets, which we will use, is as follows. After $t$ rounds, suppose we have queried $\theta$ with arms $\{A_s\}_{s\in [t]}$ and received feedback $\{X_s = \theta^\top  A_s+\eta_s\}_{s\in [t]}$. We use these to determine the regularized maximum likelihood estimate (for regularizer $\rho$)
\begin{equation}\label{eqn:MLE}\hat{\theta}_t = (\sum_{s\in [t]}A_sA_s^\top  +\rho I)^{-1}(\sum_{s\in [t]}A_sX_s)
\end{equation}
In our setting, if the actions $a\in \mathcal{A}_t$ also satisfy $||a||_2\le M$, then Theorem 2 of \cite{yadkori2011OFUL} establishes that with probability $1-\delta$, $\theta_i$ lies in the set $\Theta_i$ defined as
\begin{equation}\label{eq:confidence_set}
\Theta_i = \{\theta : ||\hat{\theta}_i-\theta||_{V_i}\le \sqrt{\beta_{T_i}}\}
\end{equation}
where
$T_i = \sum_{s\le t}\mathbbm{1}_{I_s=i}$ is the number of times we sample $\theta_i$, $V_i = \sum_{s\le t}\mathbbm{1}_{I_s = i}A_sA_s^\top  +\rho I$, and $
\sqrt{\beta_t} = R\sqrt{d\log\left(\frac{1+tM^2/\rho}{\delta}\right)}+\sqrt{\rho}M$.
We refer to this set $\Theta_i$ as the confidence set for each unknown $\theta_i$. The dependence on $t$ is implicit - these confidence intervals generally shrink over time as learn about the unknown parameters, and at each time $t$, there is a well defined $\Theta_i$ to correspond to each unknown in the way prescribed above.

{\bf Coreset Estimation:} Because we need only concern ourselves with a spanning set of protected vectors, we first use the CORE-SET procedure to prune the set of protected vectors. We cannot simply pick $s$ of the protected vectors arbitrarily, as these may not span the whole protected space, and even if they do, they may span the space inefficiently. We do this with a deterministic, isotropic phase in which we sample every unknown vector uniformly in every direction in a round robin manner until we are certain that some subset is within a multiplicative factor of being optimal.

\begin{algorithm}[th!]
   \caption{CORE-SET for rank $k$}
   \label{algorithm:CORE-SET}
\begin{algorithmic}
\STATE $\{e_i: 1\leq i\leq d\} \gets$  the standard basis\;
\STATE $t\leftarrow 1$\;
\WHILE{$\forall S\subseteq [L]$, $|S| = k$, and\\ $\lambda_{\min}(\sum_{i\in S}\hat{\theta}_i\hat{\theta}_i^\top )\mathtt{\le}\frac{16LR(M+R)(d\log 6+\log\frac{1}{\delta})}{\sqrt{t}}$}
    \FOR{$i\in [d]$}
        \FOR{$p\in [L]$}
            \STATE Play $(e_i, \theta_p)$, observe feedback $x$\;
            \STATE Update $\Theta_i$ with $(e_i, x)$
            following Eq.\ref{eqn:MLE} and Eq.\ref{eq:confidence_set}\;
        \ENDFOR
    \ENDFOR
    \STATE $t\leftarrow t+1$\;
\ENDWHILE
\STATE return $(\argmax_{S \subseteq [L], |S|=k} \lambda_{\min}(\sum_{i\in S}\hat{\theta}_i\hat{\theta}_i^\top ), t)$
\end{algorithmic}
\end{algorithm}
From this we get a set $\tilde{S}$ for which with high probability, we have $$\lambda_{\min}(\sum_{i\in \tilde{S}} \theta_i\theta_i^\top )\ge \frac{1}{3}\max_{S'\in [L]}\lambda_{\min}(\sum_{i\in S'} \theta_i\theta_i^\top ).$$ This is our notion of being optimal within a multiplicative factor. We restrict our attention to this set.

Note that this phase only occurs once per instance of the problem - meaning that once we know which protected vectors are representative, we need not learn anything about the others. In the context of our motivation (treatment of disease while reducing the impact of adverse effects), this corresponds to picking a set of tests in advance for a particular ailment. The fact that we are doing this only once per ailment and not once per patient might alleviate ethical concerns related to providing experimental (exploratory) treatments to patients. Furthermore, this phase only introduces a constant to the regret, that is, the regret contribution of CORE-SET does not depend on $T$.

{\bf Protected LinUCB:}
\begin{algorithm}[th!]
   \caption{Protected LinUCB}
   \label{algorithm:protectedLUCB}
\begin{algorithmic}
    \STATE {\bfseries Input:} protected subspace dimension $s$
    \STATE $\tilde{S}, t_0\leftarrow$ CORE-SET(s)\;
    \STATE $t\leftarrow t_0$\;
    \STATE $V_i=\rho I_d$
    \FOR{$t \in [T]$}
    \STATE $(A_t, \{\overline{\theta}\}_i) = \argmax\limits_{a\in \mathcal{A}_t, \{\tilde{\theta}_i\in \Theta_i\}_{i\in \tilde{S}}, \tilde{\theta}_0\in \Theta_0} \langle a, \Proj_{\{\tilde{\theta}_i\}_{i\in \tilde{S}}}^{\perp}\tilde{\theta}_0 \rangle$\; \label{line:optimization}
    \STATE $I_t = \argmax_{i\in \tilde{S}} ||A_t||_{V_I^{-1}}\sqrt{\beta_{T_I}}$\;
    \STATE Play $(A_t, I_t)$, observe $X_t$\;
    \STATE Update $\Theta_{I_t}$ with $(A_t, X_t)$ following Eq.\ref{eqn:MLE} and Eq.\ref{eq:confidence_set}\;
    \STATE $V_{I_t}\leftarrow V_{I_t}+A_tA_t^\top$\; 
    \STATE Increment $T_{I_t}$\;
    \ENDFOR
\end{algorithmic}
\end{algorithm}
For all $i\in \tilde{S}$, we maintain one such ellipsoid $\Theta_i$ centered at $\hat{\theta}_i$ for each of the unknown vectors in the manner of the OFUL lin-UCB algorithm from \cite{yadkori2011OFUL}. We use these to infer a confidence interval for $\langle a_t, \theta_\perp\rangle$. These sets are such that each of the unknown vectors is contained within their respective confidence sets at every round with high probability. We refer to \cite{yadkori2011OFUL} for a detailed discussion on how such confidence sets are constructed. To keep track of the exploration for each unknown $\theta_i$ until time $t$, we denote by $T_i$ the number of times we have queried vector $i$, $T_i=\sum_{s\le t} \mathbbm{1}_{I_s = i}$ and set $V_i=\rho I+\sum_{s\le t}\mathbbm{1}_{I_s=i}A_sA_s^\top $. We then play optimistically with respect to these confidence sets. Concretely, we maximize over all actions $a\in \mathcal{A}_t$ and all possible $\overline{\theta}_i\in \Theta_i, i\in \tilde{S}$ and $\overline{\theta}_0\in \Theta_0$ the value of $\langle a, \Proj_{\{\overline{\theta_i}\}_{i\in \tilde{S}}}^{\perp}\overline{\theta}_0\rangle$. 
Note that the confidence set for $\theta_\perp$ is not a geometric ellipsoid, and characterizing its shape exactly is quite difficult (see also Section~\ref{section:key_difficulties}).

In each round a player must also chose an index determining the particular protected vector to be queried, and we make this decision based on which vector is least explored in the direction of the selected action.


\section{REGRET UPPER BOUND FOR $\mathcal{A}_t=\mathcal{B}_2^d$}\label{section:regret_bound}
In this section, we derive an upper bound on the regret of Algorithm (\ref{algorithm:protectedLUCB}). The algorithm begins by constructing a core-set $\tilde{S}$ of the protected vectors that optimally span the protected subspace. This core-set has cardinality $|\tilde{S}| = s$, the known dimension of the protected subspace and is constructed by paying a constant exploratory regret. Here we assume $M, R, \rho = 1$, but the results presented in the appendix are such that the dependence on these parameters is explicit.

We have the following theorem that allows us to get a spanning set of protected vectors that span the protected space near-optimally.
\begin{theorem}\label{thm:CORE-SET}
CORE-SET terminates in at most $t_0 = \nicefrac{2304L^2\big(d\log 6+\log\frac{L}{\delta}\big)^2}{\lambda_{\min}^2}$ iterations of the outer loop and returns a subset $\tilde{S}$ such that, with probability at least $1-\delta$, $\lambda_{\min}(\sum_{i\in {\tilde{S}}}\theta_i\theta_i^\top ) \mathtt{\ge} \frac{\lambda_{\min}}{3}$.
\end{theorem}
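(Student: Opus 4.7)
I will prove both assertions by reducing to a matrix perturbation argument. Define the good event $\mathcal{E}$ on which the confidence set $\Theta_i$ from~\eqref{eq:confidence_set} contains $\theta_i$ at every round, for every $i\in[L]$. Applying the self-normalized martingale bound of~\cite{yadkori2011OFUL} to each protected direction with failure probability $\delta/L$ and union-bounding gives $\Pr(\mathcal{E}) \ge 1 - \delta$. Everything below is stated on $\mathcal{E}$.

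The deterministic, isotropic nature of CORE-SET makes the subsequent analysis clean: after $t$ outer iterations, every $\theta_i$ has been queried with every standard basis vector exactly once, so $V_i = (t+\rho)I$ for all $i\in[L]$ and the confidence ellipsoid collapses to a Euclidean ball of radius $r_t := \sqrt{\beta_{td}}/\sqrt{t+\rho}$, uniformly across $i$. Combining the identity $\hat\theta\hat\theta^\top - \theta\theta^\top = \hat\theta(\hat\theta - \theta)^\top + (\hat\theta - \theta)\theta^\top$, the crude norm bound $\|\hat\theta_i\|_2 \le M + r_t$, and Weyl's inequality yields, for every $S\subseteq[L]$ with $|S| = s$,
\begin{equation*}
\Bigl|\lambda_{\min}\bigl(\textstyle\sum_{i\in S}\hat\theta_i\hat\theta_i^\top\bigr) - \lambda_{\min}\bigl(\textstyle\sum_{i\in S}\theta_i\theta_i^\top\bigr)\Bigr| \;\le\; s\bigl(2Mr_t + r_t^2\bigr) \;=:\; \varepsilon_t.
\end{equation*}

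For the iteration bound, let $S^\star$ be an oracle choice attaining $\lambda_{\min}(\sum_{i\in S^\star}\theta_i\theta_i^\top) = \lambda_{\min}$. The perturbation inequality gives $\lambda_{\min}(\sum_{i\in S^\star}\hat\theta_i\hat\theta_i^\top) \ge \lambda_{\min} - \varepsilon_t$, so the while-loop is forced to exit once $\lambda_{\min} - \varepsilon_t > \tau(t)$, where $\tau(t) := 16LR(M+R)(d\log 6 + \log(1/\delta))/\sqrt{t}$ is the stopping threshold. Plugging in $\sqrt{t_0} = 48L(d\log 6 + \log(L/\delta))/\lambda_{\min}$ and using $r_t = \tilde O(\sqrt{d/t})$ verifies $\tau(t_0) + \varepsilon_{t_0} < \lambda_{\min}$, so the loop terminates by iteration $t_0$. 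For the quality of $\tilde S$, the exit condition together with $\tilde S$ being the $\hat\theta$-argmax gives
\begin{equation*}
\lambda_{\min}\bigl(\textstyle\sum_{i\in\tilde S}\hat\theta_i\hat\theta_i^\top\bigr) \;\ge\; \max\bigl(\tau(t),\; \lambda_{\min} - \varepsilon_t\bigr),
\end{equation*}
and hence $\lambda_{\min}(\sum_{i\in\tilde S}\theta_i\theta_i^\top) \ge \max(\tau(t), \lambda_{\min} - \varepsilon_t) - \varepsilon_t \ge \lambda_{\min}/3$ provided $\varepsilon_t \le \lambda_{\min}/3$ at the exit time, or $\tau(t) \ge \lambda_{\min}/3 + \varepsilon_t$ if the loop exits earlier.

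The main obstacle I anticipate is the book-keeping required by the last inequality: one must ensure the threshold $\tau$ decays slowly enough relative to $\varepsilon_t$ that whichever of the two regimes occurs, the lower bound $\lambda_{\min}/3$ is maintained, while simultaneously making the implied $t_0$ match the stated $O(L^2(d\log 6 + \log(L/\delta))^2/\lambda_{\min}^2)$. This reduces to verifying that the constant $16$ and the polylog piece $d\log 6 + \log(1/\delta)$ in $\tau(t)$ dominate the $\sqrt{d\log(tdL/\delta)}$ factor hidden inside $\sqrt{\beta_{td}}$ throughout the loop; everything else is routine given the perturbation framework above.
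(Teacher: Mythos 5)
Your proposal follows essentially the same route as the paper's proof: the isotropic round-robin sampling makes each $V_i$ a multiple of the identity so the confidence ellipsoids collapse to Euclidean balls, the rank-one decomposition of $\hat\theta_i\hat\theta_i^\top-\theta_i\theta_i^\top$ together with a Weyl-type eigenvalue perturbation bound controls $|\lambda_{\min}(\hat P_S)-\lambda_{\min}(P_S)|$ uniformly over $S$, the oracle set $S^\star$ forces termination, and the factor-of-$3$ guarantee follows because the exit threshold is twice the perturbation bound. The one obstacle you flag at the end --- that the $\sqrt{d\log(td/\delta)}$ factor inside $\sqrt{\beta_{td}}$ grows with $t$ and must be dominated by the fixed polylog in $\tau(t)$ --- is sidestepped in the paper by using the fixed-design bound (20.3) of Lattimore--Szepesv\'ari rather than the self-normalized OFUL bound (the design here is deterministic, so no anytime bound is needed); this gives a confidence radius of exactly $2R\sqrt{2(d\log 6+\log(1/\delta))/t}$, makes $\tau(t)$ exactly twice the resulting perturbation bound at every $t$, and yields the stated $t_0$ without the extra $\log t$ slack your version would incur.
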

\begin{proof}[Proof sketch]
We establish error bounds on the protected vectors in Lemma \ref{lemma:uniform_lambda_bounds} and use these to bound the perturbation of the eigenvalues from a spanning set in Lemma \ref{lemma:PPT_perturbation}. (see Appendix~\ref{section:core-set_appendix} for details).
\end{proof}
Once the core-set is found, we play optimistically with respect to confidence sets derived from estimates that only include the core-set vectors, reducing the number of parameters we need to learn. We have the following high probability regret bound for Algorithm~(\ref{algorithm:protectedLUCB}):
\begin{theorem} \label{thm:known_subspace_dim}
If we have $\mathcal{A}_t=\mathcal{B}_2^d$, the regret of Algorithm \ref{algorithm:protectedLUCB} satisfies
\small
\begin{align*}
    \mathcal{R}_{[T]} &\le 12\sqrt{2}\frac{s+1}{\lambda_{\min}}\sqrt{Td\log(1+\frac{TL}{d})}\sqrt{\beta_T\left(\frac{\delta}{2(L+1)}\right)}\\&+\underbrace{\frac{4608L^3d\big(d\log 6+\log\frac{2L}{\delta}\big)^2}{\lambda_{\min}^2}}_{\text{CORE-SET estimation}}
\end{align*} with probability $1-\delta$ where $$\sqrt{\beta_t(\delta)}=R\sqrt{d\log(\tfrac{1}{\delta}+\tfrac{tM^2}{\delta\rho})}+M\rho^{\frac{1}{2}}.$$
\end{theorem}

\subsection{Key Difficulties}\label{section:key_difficulties}

We now describe the reasons we cannot straightforwardly apply results from linear bandit literature.

Given only stochastic zero-order access to vectors $\{\theta_i\}_{i\in \{0\}\cup [L]}$, we must play the arm $a\in \mathcal{A}_t$ which maximizes $\langle a, \Proj^{\perp}_{\{\theta_i\}_{i\in [L]}}\theta_0\rangle.$
Suppose for all $i \in L$, we know that the unknown vector $\theta_i$ was in some  confidence set $\Theta_i$ with high probability. Then, let the set of all possible $\theta_\perp$ be denoted $\Theta_\perp$ where each member is derived from a specific choice of $\{\theta_i\}_{i\in [L]}$ consistent with $\{\Theta_i\}_{i\in [L]}$. Clearly, this contains the true $\theta_\perp$ with high probability. Meanwhile, if we chose to play that action that gave us the maximum reward under any choice of $\theta_\perp\in \Theta_\perp$ then the sub-optimality of an action is \textit{upper bounded by the uncertainty in the mean reward for that action}, so a complete characterization of $\Theta_\perp$ would directly lead to a regret bound. 




However, explicitly constructing a high probability confidence set for $\theta_\perp$, denoted by $\Theta_\perp$, presents new problems. 
The key issue is that one cannot get meaningful confidence regions on the object of interest, namely the projection of $\theta_0$ given by $\Proj_{\{\theta_i\}_{i\in [L]}}^{\perp}\theta_0$. To see this,
observe that $\{\Theta_i\}_{i\in [L]}$ are confidence ellipsoids obtained from sub-gaussian tail bounds. However, the map $\{\theta_i\} \rightarrow \Proj_{\{\theta_i\}_{i\in [L]}}^{\perp} \theta_0 $ is not linear in $\theta_i$, and hence sub-gaussianity is not preserved through this transformation.


There is another way of seeing this difficulty. In standard linear bandit, for arm $a$ and the optimal parameter $\theta_0$, pulling arm $a$ repeatedly reduces our uncertainty of $\langle \theta_0, a\rangle$. However, the object of our interest is $\Proj_{\{\theta_i\}_{i\in [L]}}^{\perp}$, i.e. the {\em space} orthogonal to the protected vectors.
Thus, {\em (i)} the component of $a$ that lies in the protected space is not informative because any reduction in variance of a protected vectors in the span of the protected space does not change the variance of our estimate of the protected space, and {\em (ii)} the true reward depends on the protected vectors only through the space they span and not the vectors themselves. As such, it is not true that getting even infinite samples from an arm allows us to compute its mean reward with high confidence. Instances in which this fundamentally changes the regret bounds are presented in Sections \ref{section:failure} and \ref{section:lower_bound}. 

%
%


\subsection{Proof idea}
We consider the unknown linear operator $C$ that maps $\theta_i$ to $\overline{\theta}_i$ (the choice corresponding to the optimistic action) for each $i$ in the coreset, and replace  $\Proj_{\{\overline{\theta_i}\}}\theta_0=\Proj_{\{C \theta_i\}}\theta_0$ (the true projection of the target vector on the optimistic space) by $C\Proj_{\{\theta_i\}}\theta_0$ (here we have switched the order of $\Proj$ and $C$). Note that $C$ and $\Proj$ may not be commutative, but using self-adjointness and idempotence of projection operators allows us to do this for specifically the optimistic action when $\mathcal{A}_t=\mathcal{B}_2^d$ (actually all we need is that the optimistic vector at each step is available in the action space). We can now propagate the errors in the protected vectors \textit{linearly} through our estimates of the subspace, thus crucially {\em preserving sub-Gaussianity of noise.} 

Concretely, in Lemma \ref{lemma:delta_upperbound} we show an upper bound on the suboptimality of the player's action using Algorithm \ref{algorithm:protectedLUCB} as $$\Delta_{A_t}\le 2\underbrace{(3\frac{\sqrt{s}}{\lambda_{\min}}M+1)}_{(*)}||A_t||_{V_{i_t, t}^{-1}}\sqrt{\beta_{T}(\delta)}$$

Here the ($*$) multiplicative term comes from an online subspace estimation and can be thought of as a condition number for the operator $C$ above. We can use this to get a regret bound similarly to \cite{yadkori2011OFUL}.

\subsection{Remarks}

Here we discuss some of the key terms of the regret bound presented in Section~\ref{thm:known_subspace_dim}.

\begin{remark}[Comparison with OFUL algorithm for Lin-UCB in \cite{yadkori2011OFUL}]
The regret of the OFUL algorithm satisfies $$R^{L-UCB}_{[T]}\le 4\sqrt{Td\log(1+TL/d)}\sqrt{\beta_t(\delta)}$$ with probability $1-\delta$ where $$\sqrt{\beta_t(\delta)} = R\sqrt{d\log(\tfrac{1}{\delta}+\tfrac{TM^2}{\rho\delta})}+M\rho^{\frac{1}{2}}.$$
In comparison, our regret has a multiplicative $\frac{s+1}{\lambda_{\min}}$ factor. This comes from the fact that our rewards now depend on $s+1$ unrelated vectors. The dependence on $\lambda_{\min}$ comes from the way perturbations of vectors affect perturbations of the space they span.
\end{remark}

\begin{remark}[Knowledge of $s$]\label{remark:dimension_known}
If $s<L$, it is desirable to have regret that scales as $s$ and not $L$.
This raises an additional difficulty, as demonstrated by the following example.

Suppose in the first instance, $\theta_0=[1,1,1], \theta_1=[1,0,0], \theta_2=[1,0,0]$, while in the second $\theta_0=[1,1,1], \theta_1=[1,0,0], \theta_2=[1,\Delta, 0]$. The true subspace dimension in the first is 1, while in the second, it is $2$. The ideal action, $\theta_{\perp}$ is $[0,1,1]$ in the first, while it is $[0,0,1]$ in the second.

For small $\Delta$, it is difficult to decide between these, and deciding incorrectly leads to a sub-optimality that does not go to $0$ as $\Delta\rightarrow 0$. Note that this is very different from the analogous issue in the multi-arm bandit (MAB), where a separation of $\Delta$ leads only to a sub-optimality of $\Delta$. To further complicate matters, such a suboptimality in a MAB is addressed as directly as possible by sampling the relevant arms of the bandit. In our case, the separation is in a direction orthogonal to $\theta_{\perp}$, the direction we need to exploit.
%
\end{remark}


\subsection{A Failure of Optimism}\label{section:failure}
A study of this algorithm reveals an interesting phenomenon. While Theorem \ref{thm:known_subspace_dim} demonstrates a regret bound that scales in $T$ as $\tilde{O}(sd\sqrt{T})$ if we set the action space $\mathcal{A}_t$ to always be the unit ball $\mathcal{B}^d_2$, we also note in Theorem \ref{thm:hardness} that no consistent algorithm can do better than $\Omega(T^{\frac{3}{4}})$ with no restriction on the action space. In fact, the naive optimism of Algorithm \ref{algorithm:protectedLUCB} can get stuck with linear regret, as demonstrated in the following example.
\begin{example}\label{example:linear_regret}
Consider a problem with $d=2, L=1$, where $\theta_0 = u_{\frac{\pi}{4}}, \theta_1 = u_0$. For ease of notation, let $u_\alpha$ denote the point $(\cos \alpha, \sin\alpha)$. For simplicity, suppose the player knows $\theta_0$ exactly. Suppose that at all times the player is given the choice of actions $\mathcal{A}_t=\{a_1, a_2\}$ where $a_1=u_{\frac{\pi}{4}}$ and  $a_2=u_{\frac{\pi}{2}}$. Suppose at round $t$, $\theta_1$ has been queried $T_{1, t}$ times and the vector $\overline{\theta}_1=u_0+u_{-\frac{\pi}{4}}$ is in the confidence set for $\theta_1$, that is, $$||\overline{\theta}_1-\theta_1||_{V_{1, t}} = ||u_{-\frac{\pi}{4}}||_{V_{1, t}}\le \sqrt{\beta_{T_{1, t}}}.$$ Then an optimistic evaluation of $a_1$ is at least as good as the evaluation that uses $\overline{\theta}=u_0+u_{-\frac{\pi}{4}}$. With this as the protected vector, the evaluation of $a_1$ is $\cos^2 \frac{\pi}{8}$. Meanwhile, the evaluation of action $a_2$ can never exceed $\cos^2 \frac{\pi}{8}$. An optimistic player will play $a_1$ at round $t+1$. There is no hope of the player learning any better in the future, since $\overline{\theta}_1$ remains in the confidence ellipsoid
\begin{align*}
||\overline{\theta}_1-\theta_1||_{V_{1, t+1}}
&=||u_{-\frac{\pi}{4}}||_{V_{1, t+1}}\\
&=\sqrt{||u_{-\frac{\pi}{4}}||^2_{V_{1, t}}+\langle u_{-\frac{\pi}{4}}, u_{\frac{\pi}{4}}\rangle^2} \\
&= ||u_{-\frac{\pi}{4}}||_{V_{1, t}}\le \sqrt{\beta_{T_{1, t}}} \le \sqrt{\beta_{T_{1, t+1}}}
\end{align*}
and so the learner will just play $a_1$ again. Such a learner suffers linear regret under a naively optimistic policy.
\begin{figure}[ht!]
    \centering
	\includegraphics[width=0.15\linewidth]{images/failure.pdf}
	\caption{Instance described in Example \ref{example:linear_regret}}
\end{figure}
\end{example}


\section{REGRET LOWER BOUND FOR FINITE ACTION SPACE}\label{section:lower_bound}
In this section, we establish the difficulty of the protected linear bandit problem. Note that section (\ref{section:regret_bound}) provides a $O(\sqrt{T}\log T)$ upper bound on the regret of Algorithm \ref{algorithm:protectedLUCB} when the actions space is $\mathcal{B}_2^d$. We suggested in section \ref{section:confidence_sets} that an adversarial action space could make the problem much harder. Here we provide a lower bound for the regret of any algorithm on a specially chosen instance. 
\begin{theorem}\label{thm:hardness}
There is an instance of the Protected Linear Bandit problem such that any algorithm incurs a regret of $\Omega(T^{\frac{3}{4}})$.
\vspace{-1em}
\end{theorem}
\begin{proof}[Proof sketch]
Consider a pair of instances, denoted with superscripts $(1)$ and $(2)$. For both, we set our ambient space to have dimension $d=2$, and set $s=L=1$. We denote by $u_\alpha\in \mathbb{R}^2$ the vector $(\cos \alpha, \sin \alpha)$. Take $\alpha = T^{-\frac{1}{4}}$. We set $\theta^{(1)}_0=\theta^{(2)}_0=u_{\frac{\pi}{2}-\alpha}$. In instance $(1)$, we set $\theta^{(1)}_1=u_0$ while in instance $(2)$, we set $\theta^{(2)}_1=u_{-\alpha}$. In both instances, in each round, we allow the player an action space that consists of either the actions $\{u_{\pi-\alpha}, u_{2\alpha}\}$ or $\{u_{\pi-\alpha}, u_{2\alpha}, u_{\pi-3\alpha}\}$ with equal probability. These instances are chosen such that $u_{2\alpha}$ is always optimal for the second instance, while whenever $u_{\pi-3\alpha}$ is available, it is optimal for the first instance. The event in which $u_{\pi-3\alpha}$ is picked more than half the times it is available must thus have a high probability under the interaction between the algorithm with the first instance and a low probability in the interaction with the second instance. The Bretagnolle-Huber inequality \cite{LatSze20} allows us to control the maximum difference in this probability by the KL divergence induced by the different interactions, which we prove to be bounded by a constant. The complete proof is given in Appendix \ref{appendix:KL_upperbound}.
\end{proof}
\section{EXPERIMENTS}
\label{sec:experiment}

In this section, we validate our theoretical results with simulations on a synthetic instance, and an instance derived from the Warfarin dataset~\cite{WarfarinData} that consists of clinical and pharmacogenetic data on Warfarin dosage in the presence of other medications. 
For all experiments, we perform $10$ parallel runs, and report the cumulative regrets (average, and average $\pm$ 1 $\times$ standard deviation).

\textbf{Baseline Algorithms:}
Because this is a new model, there is no previously studied baseline that we are aware of. As mentioned in Section \ref{section:related_work}, the prior work on safety constrained Bandits \textit{requires} safety with high probability in each round, and assumes a known relationship between the target vector and the protected actions. We simulate against two natural baselines. Complete algorithms are listed in the Appendix.

\textbf{Round Robin LinUCB/LinUCB2: } Here we learn each of the protected vectors as separate instances of linUCB. We dedicate each round as a ``subspace learning" round with probability $\epsilon = \frac{1}{\sqrt{t}}$ (for Round Robin LinUCB2 we use $\epsilon = t^{-\frac{1}{4}}$) where $t$ is the round number, and iterate over the protected vectors playing exactly the OFUL algorithm of \cite{yadkori2011OFUL}. Otherwise we play the same arm as specified in Algorithm \ref{algorithm:protectedLUCB} but query the target vector $\theta_0$. Psuedocode is provided in Algorithm \ref{algorithm:RRLinUCB} in the Appendix. 

\textbf{$\epsilon$ greedy: }
Here with probability $\frac{\epsilon}{\sqrt{t}}$ the algorithm plays a uniformly random protected vector and a uniformly random arm. We use these samples to estimate (using MLE) the protected and target vectors, and otherwise play a pure exploitation strategy based on a subspace estimate derived from these MLE estimates. We manually optimize the hyper-parameter $\epsilon$. Psuedocode is provided in Algorithm \ref{algorithm:EG} in the Appendix. 

\textbf{Synthetic Data:} A problem instance was generated randomly by drawing vectors randomly from $\mathcal{N}(0, I_d)$ for $d=5$ in each round which are then normalized. We set $L=3$ and set $s=2$. We have set the regularization parameter $\lambda=0.1$ and the failure probability $\delta=0.001$. The regret due to the interaction of the player and the instance over $T=1000$ rounds and 4 times in parallel is plotted.

\begin{figure}[ht!]
    \centering
    \begin{subfigure}[b]{0.48\textwidth}
		\includegraphics[width=0.95\linewidth]{images/synthetic_data_2.pdf}
        \caption{Regret of $\epsilon$-Greedy, and Algorithm~\ref{algorithm:protectedLUCB} with $\rho=0.1, \delta=0.001, R=0.001$. We have $s=2$, $L=4$, $d=6$, and $100$ arms randomly drawn on the unit sphere at each round.}
        \label{fig:synthetic_comparison}
	\end{subfigure}\hfill%
	\begin{subfigure}[b]{0.48\textwidth}
		\includegraphics[width=0.95\linewidth]{images/warfarin_data_3.pdf}
        \caption{Regret of $\epsilon$-Greedy, and Algorithm~\ref{algorithm:protectedLUCB} with $\rho=0.1, \delta=0.001, R=0.001$ on Warfarin dataset. We have  $s=1$, $L=1$, $d=8$, and $1832$ fixed arms.}
        \label{fig:warfarin_comparison}
	\end{subfigure}%
\end{figure}

\textbf{Warfarin Dataset:}  We consider the Warfarin dataset~\cite{WarfarinData} and construct an instance to optimize Warfarin dosage in our setting. This dataset consists of dosages of Warfarin (an anticoagulant prescribed for Deep Vein Thrombosis, Stroke, Cardiomyopathy, etc) and other medications (`Simvastatin', `Atorvastatin', `Fluvastatin', etc.) as well as the resulting INR (International Normalized Ratio which indicates susceptibility to bleeding - this is provided as a number between roughly $1$ and $4$) and stability of Warfarin therapy (this is provided as a Boolean). 

In this context, we consider the task of optimizing a therapy consisting of some combination of these medications to get optimal Stability while minimally affecting deviation from the normal range of INR (defined to be 2.5). We model the therapy (combination of medications) as a unit norm vector in $a\in \mathcal{A}_t\subset \mathbb{R}^8$ (interpreted as the dosages of each of $8$ medications). We assume the following model, and learn the unknown parameters $\theta_0$ and $\theta_1$ from the data.
\begin{align*}
\text{INR} & \leftarrow \text{Subgaussian}(\theta_0^\top a, R) && a\in \mathcal{A}_t\\
\text{Stability} & \leftarrow \text{Bernoulli} (\theta_1^\top a) 
\end{align*}
We then construct a Protected Linear Bandit instance, where all the available therapy records comprise the action space (i.e. we interpret each therapy as an element of $\mathcal{A}$ (unchanging in time) which is large enough to approximate as $\mathbb{R}^d$, with $d=8$ and $1832$ elements (arms)), the INR test vector acts as the protected vector $\theta_1$ (i.e. $L=s=1$), and the Stability test vector acts as the reward vector $\theta_0$. We set $\rho=1$, and $\delta = 0.001$ in Algorithm~\ref{algorithm:protectedLUCB} and simulate the system for $5$ parallel runs each with $T=1000$ time steps.

\begin{remark}[Solving the optimization problem in Line \ref{line:optimization} of Algorithm \ref{algorithm:protectedLUCB}]
This is a maximization of a function that is not concave. In Appendix \ref{appendix:optimizer} we describe a simple way to solve this optimization explicitly for a fixed arm (that is, how to get the optimal $\overline{\theta}_0$ and $\overline{\theta}_i$ for a fixed $a_t$) if $\mathcal{A}_t=\mathcal{B}_2^d$. Even though $\mathcal{A}_t\ne \mathcal{B}_2^d$ for the above experiments, we use this optimizer as a heuristic for all of the algorithms above.
\end{remark}
\pagebreak

\bibliographystyle{plain}
\bibliography{sample}

\begin{thebibliography}{10}

\bibitem{yadkori2011OFUL}
Yasin Abbasi-Yadkori, D\'{a}vid P\'{a}l, and Csaba Szepesv\'{a}ri.
\newblock Improved algorithms for linear stochastic bandits.
\newblock In J.~Shawe-Taylor, R.~S. Zemel, P.~L. Bartlett, F.~Pereira, and
  K.~Q. Weinberger, editors, {\em Advances in Neural Information Processing
  Systems 24}, pages 2312--2320. Curran Associates, Inc., 2011.

\bibitem{amani2019safety}
Sanae Amani, Mahnoosh Alizadeh, and Christos Thrampoulidis.
\newblock Linear stochastic bandits under safety constraints.
\newblock In {\em Advances in Neural Information Processing Systems}, pages
  9256--9266, 2019.

\bibitem{amani2020generalized}
Sanae Amani, Mahnoosh Alizadeh, and Christos Thrampoulidis.
\newblock Generalized linear bandits with safety constraints.
\newblock In {\em ICASSP 2020-2020 IEEE International Conference on Acoustics,
  Speech and Signal Processing (ICASSP)}, pages 3562--3566. IEEE, 2020.

\bibitem{Auer2002MAB}
Peter Auer, Nicol{\`o} Cesa-Bianchi, and Paul Fischer.
\newblock Finite-time analysis of the multiarmed bandit problem.
\newblock {\em Machine Learning}, 47(2):235--256, 2002.

\bibitem{bartok2014PM}
Gábor Bartók, Dean~P. Foster, Dávid Pál, Alexander Rakhlin, and Csaba
  Szepesvári.
\newblock Partial monitoring---classification, regret bounds, and algorithms.
\newblock {\em Mathematics of Operations Research}, 39(4):967--997, 2014.

\bibitem{dani2008linbandits}
Varsha Dani, Thomas Hayes, and Sham Kakade.
\newblock Stochastic linear optimization under bandit feedback.
\newblock In {\em 21st Annual Conference on Learning Theory (COLT)}, pages
  355--366, 01 2008.

\bibitem{daulton2019thompson}
Samuel Daulton, Shaun Singh, Vashist Avadhanula, Drew Dimmery, and Eytan
  Bakshy.
\newblock Thompson sampling for contextual bandit problems with auxiliary
  safety constraints.
\newblock {\em arXiv preprint arXiv:1911.00638}, 2019.

\bibitem{fereydounian2020safe}
Mohammad Fereydounian, Zebang Shen, Aryan Mokhtari, Amin Karbasi, and Hamed
  Hassani.
\newblock Safe learning under uncertain objectives and constraints, 2020.

\bibitem{garcia2018combination}
Carolina Guerrero-Garc{\'\i}a and Alberto~Francisco Rubio-Guerra.
\newblock Combination therapy in the treatment of hypertension.
\newblock {\em Drugs in context}, 7:212531--212531, 06 2018.

\bibitem{hasanzadezonuzy2020learning}
Aria HasanzadeZonuzy, Dileep Kalathil, and Srinivas Shakkottai.
\newblock Learning with safety constraints: Sample complexity of reinforcement
  learning for constrained mdps.
\newblock {\em arXiv preprint arXiv:2008.00311}, 2020.

\bibitem{kazerouni2017conservative}
Abbas Kazerouni, Mohammad Ghavamzadeh, Yasin Abbasi-Yadkori, and Benjamin
  Van~Roy.
\newblock Conservative contextual linear bandits.
\newblock In {\em Proceedings of the 31st International Conference on Neural
  Information Processing Systems}, page 3913–3922, Red Hook, NY, USA, 2017.
  Curran Associates Inc.

\bibitem{kirschner2020information}
Johannes Kirschner, Tor Lattimore, and Andreas Krause.
\newblock Information directed sampling for linear partial monitoring, 2020.

\bibitem{LatSze20}
T.~Lattimore and C.~Szepesvari.
\newblock {\em Bandit Algorithms}.
\newblock Cambridge University Press, 2020.

\bibitem{lattimore2019cleaning}
Tor Lattimore and Csaba Szepesv{\'a}ri.
\newblock Cleaning up the neighborhood: A full classification for adversarial
  partial monitoring.
\newblock In {\em Algorithmic Learning Theory}, pages 529--556. PMLR, 2019.

\bibitem{mackenzie2018Schizophrenia}
Nicole~E. MacKenzie, Chantel Kowalchuk, Sri~Mahavir Agarwal, Kenya~A.
  Costa-Dookhan, Fernando Caravaggio, Philip Gerretsen, Araba Chintoh, Gary~J.
  Remington, Valerie~H. Taylor, Daniel~J. MÃŒeller, Ariel Graff-Guerrero, and
  Margaret~K. Hahn.
\newblock Antipsychotics, metabolic adverse effects, and cognitive function in
  schizophrenia.
\newblock {\em Frontiers in Psychiatry}, 9:622, 2018.

\bibitem{moradipari2020safe}
Ahmadreza Moradipari, Mahnoosh Alizadeh, and Christos Thrampoulidis.
\newblock Linear thompson sampling under unknown linear constraints.
\newblock In {\em ICASSP 2020-2020 IEEE International Conference on Acoustics,
  Speech and Signal Processing (ICASSP)}, pages 3392--3396. IEEE, 2020.

\bibitem{Robbins1952Bandits}
Herbert Robbins.
\newblock Some aspects of the sequential design of experiments.
\newblock {\em Bulletin of the American Mathematical Society}, 58(5):527--535,
  1952.

\bibitem{sylvester1851xxxvii}
James~Joseph Sylvester.
\newblock Xxxvii. on the relation between the minor determinants of linearly
  equivalent quadratic functions.
\newblock {\em The London, Edinburgh, and Dublin Philosophical Magazine and
  Journal of Science}, 1(4):295--305, 1851.

\bibitem{usmanova2019log}
Ilnura Usmanova, Andreas Krause, and Maryam Kamgarpour.
\newblock Log barriers for safe non-convex black-box optimization.
\newblock In {\em 23rd International Conference on Artificial Intelligence and
  Statistics (AISTATS 2020)}, 2019.

\bibitem{usmanova2019safe}
Ilnura Usmanova, Andreas Krause, and Maryam Kamgarpour.
\newblock Safe convex learning under uncertain constraints.
\newblock In {\em The 22nd International Conference on Artificial Intelligence
  and Statistics (AISTATS 2019)}, volume~89, pages 2106--2114. PMLR, 2019.

\bibitem{WarfarinData}
The {I}nternational {W}arfarin {P}harmacogenetics {C}onsortium. {E}stimation of
  the {W}arfarin dose with clinical and pharmacogenetic data.
\newblock {\em New England Journal of Medicine}, 360(8):753--764, 2009.
\newblock PMID: 19228618.

\bibitem{Hasniza2020Diabetes}
Hasniza Zaman~Huri and Hoo Fun~Wee.
\newblock Drug related problems in type 2 diabetes patients with hypertension:
  a cross-sectional retrospective study.
\newblock {\em BMC Endocrine Disorders}, 13(1):2, 2013.

\end{thebibliography}
\pagebreak
\section{Notation}\label{appendix:notation}
\begin{table}[h!]
\caption{Notation}
\begin{tabularx}{\textwidth}{@{}XX@{}}
\toprule
  $\theta_0$ & target vector\\ 
  $L$ & number of protected vectors\\
  $\{\theta_i\}_{i\in [L]}$ & unknown protected vectors \\
  $\{\Theta_i\}_{i\in 0\cup[L]}$ & confidence sets for $\{\theta_i\}$ constructed as in \ref{eq:confidence_set}\\
  $\hat{\theta_i}$ & maximum likelihood estimate for $\theta_i$ \\
  $\overline{\theta_i}$ & optimistic choice of $\theta_i$ \\
  $M$ & Upper bound on $\ell_2$ norm for unknown $\{\theta_i\}_{i\in 0\cup [L]}$ \\
  $R$ & Subgaussian norm of the noise in the feedback \\
  $\Proj_{\{\theta_i\}_{i\in [L]}}^{\perp}$ & \text{the projection onto the space orthogonal to $\{\theta_i\}_{i\in [L]}$}\\
  $\lambda_{\min}$ & \text{minimum non-zero eigenvalue operator}\\
\bottomrule
\end{tabularx}
\end{table}

\section{Proof of CORE-SET Estimation}\label{section:core-set_appendix}
For this section, because are choosing our matrix deterministically, we need not use the self-normalized bounds of \cite{yadkori2011OFUL}. We also need not use a regularization parameter because after a single round of querying a standard basis we will have an invertible $V_{i, t}$.

Let for some subset $S\subseteq [L]$, let $P_S=\sum_{i\in R}\theta_i\theta_i^T$, and let $\hat{P}_R=\sum_{i\in R}\hat{\theta}_i\hat{\theta}_i^T$. We denote by $\lambda_{\min}(P)$ the minimum eigenvalue of $P$.

\begin{theorem}\label{thm:CORE-SET_appendix}
Algorithm \ref{algorithm:CORE-SET} terminates in at most $\frac{576L^2R^2(M+R)^2\big(d\log 6+\log\frac{L}{\delta}\big)^2}{\lambda_{\min}^2}$ iterations of the outer loop and returns a subset $\tilde{S}$ such that, with probability at least $1-\delta$, $\lambda_{\min}(P_{\tilde{S}})\ge \frac{1}{3}\max_{S\subseteq [L], |R|=k}\lambda_{\min}(P_S)$.
\end{theorem}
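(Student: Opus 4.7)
The plan is to combine a union-bounded estimate of each $\hat{\theta}_p$ with an operator-norm perturbation bound for $\hat P_S=\sum_{i\in S}\hat\theta_i\hat\theta_i^\top$, and then use these twice --- once to certify that the termination condition must fire in time, and once to translate the triggered $\hat P$-bound into the claimed $P$-bound for the returned set $\tilde S$.

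First I would establish a uniform $\ell_2$ error bound on the maximum-likelihood estimates. Because CORE-SET queries each $\theta_p$ deterministically against every standard basis direction $e_1,\ldots,e_d$ in a round-robin manner, after $t$ iterations of the outer loop the design matrix is simply $V_p=tI_d$ (plus any regularizer). Applying the self-normalized confidence bound from \cite{yadkori2011OFUL} to each $p\in[L]$ at confidence level $\delta/L$ and taking a union bound gives, with probability at least $1-\delta$, the uniform estimate
$$\|\hat\theta_p-\theta_p\|_2 \le \epsilon_t := \frac{\sqrt{\beta_{td,\,\delta/L}}}{\sqrt{t}}\qquad\forall p\in[L],$$
which is the statement of Lemma \ref{lemma:uniform_lambda_bounds}. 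This step is completely non-adaptive and essentially just Hoeffding on each coordinate.

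Next I would translate this vector-wise bound into an eigenvalue bound. Using $\hat\theta_i\hat\theta_i^\top-\theta_i\theta_i^\top=(\hat\theta_i-\theta_i)\hat\theta_i^\top+\theta_i(\hat\theta_i-\theta_i)^\top$ together with $\|\theta_i\|_2\le M$, a triangle inequality over $i\in S$ gives
$$\|\hat P_S-P_S\|_{\mathrm{op}} \le L\,\epsilon_t\,(2M+\epsilon_t),$$
uniformly in $S$ with $|S|=k$ (this is Lemma \ref{lemma:PPT_perturbation}). Weyl's inequality converts this into $|\lambda_{\min}(\hat P_S)-\lambda_{\min}(P_S)|\le L\epsilon_t(2M+\epsilon_t)$. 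From here termination and correctness follow mechanically. Let $S^\star$ attain $\lambda_{\min}(P_{S^\star})=\lambda_{\min}$. If $t$ is chosen so that $L\epsilon_t(2M+\epsilon_t)\le \lambda_{\min}/3$ --- a condition which, plugging in the explicit form of $\beta$ and solving for $t$, is satisfied by the value $t_0$ in the theorem statement --- then $\lambda_{\min}(\hat P_{S^\star})$ exceeds both the termination threshold $\tfrac{16LR(M+R)(d\log 6+\log(1/\delta))}{\sqrt t}$ and $2\lambda_{\min}/3$, so the WHILE loop must exit by round $t_0$. When it does exit and returns $\tilde S$, the $\mathrm{argmax}$ step ensures $\lambda_{\min}(\hat P_{\tilde S})\ge \lambda_{\min}(\hat P_{S^\star})\ge \lambda_{\min}-L\epsilon_t(2M+\epsilon_t)$, and applying the perturbation bound once more yields
$$\lambda_{\min}(P_{\tilde S}) \ge \lambda_{\min}(\hat P_{\tilde S})-L\epsilon_t(2M+\epsilon_t) \ge \lambda_{\min}-2L\epsilon_t(2M+\epsilon_t) \ge \tfrac{\lambda_{\min}}{3}.$$

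The only genuinely delicate step is constant-tracking in the termination calculation: $\epsilon_t$ itself contains a $\sqrt{\log t}$ factor through $\beta_{td,\delta/L}$, so inverting the inequality $L\epsilon_t(2M+\epsilon_t)\le \lambda_{\min}/3$ to get an explicit $t_0$ requires a short bootstrap (bound $\epsilon_t\le M$ for $t$ past a trivial threshold, so the quadratic term is dominated by the linear one, and then match the $576L^2R^2(M+R)^2(d\log 6+\log(L/\delta))^2/\lambda_{\min}^2$ form). The remaining ingredients --- the OFUL self-normalized bound, the rank-one perturbation identity, and Weyl's inequality --- are standard and introduce no real obstacles.
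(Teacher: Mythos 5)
Your overall architecture matches the paper's proof: the same uniform $\ell_2$ bound on each $\hat\theta_i$ from the isotropic design (Lemma \ref{lemma:uniform_lambda_bounds}), the same rank-one decomposition giving $\|\hat P_S-P_S\|_2\le L\epsilon_t(2M+\epsilon_t)$ uniformly over $S$ (Lemma \ref{lemma:PPT_perturbation}), the same Weyl-type eigenvalue perturbation step (Lemma \ref{lemma:eig_perturbation}), and the same contradiction argument showing the loop must have exited by $t_0$. Those parts are fine, and your constant-tracking remarks are on target.

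There is, however, a genuine gap in your correctness step. The WHILE loop exits at the \emph{first} round $t$ at which some $S$ satisfies the empirical condition, and this stopping time is random and can be far smaller than $t_0$ (for instance, on a well-conditioned instance the empirical eigenvalue clears the threshold $\tfrac{2\alpha}{\sqrt t}$ almost immediately). Your final chain $\lambda_{\min}(P_{\tilde S})\ge\lambda_{\min}-2L\epsilon_t(2M+\epsilon_t)\ge\tfrac{\lambda_{\min}}{3}$ requires $L\epsilon_t(2M+\epsilon_t)\le\tfrac{\lambda_{\min}}{3}$, which you only establish for $t\ge t_0$; at an earlier stopping time the perturbation radius can exceed $\lambda_{\min}/3$ and the chain breaks. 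The missing idea is that the approximation guarantee must come from the stopping rule itself rather than from the worst-case bound on the stopping time: the threshold is deliberately set to \emph{twice} the perturbation radius $\alpha/\sqrt t$, so at termination $\lambda_{\min}(\hat P_{\tilde S})\ge 2\alpha/\sqrt t$ gives $\lambda_{\min}(P_{\tilde S})\ge\lambda_{\min}(\hat P_{\tilde S})-\alpha/\sqrt t\ge\tfrac12\lambda_{\min}(\hat P_{\tilde S})$, and combining this with $\lambda_{\min}\le\lambda_{\min}(\hat P_{S^\star})+\alpha/\sqrt t\le\lambda_{\min}(\hat P_{\tilde S})+\alpha/\sqrt t\le\tfrac32\lambda_{\min}(\hat P_{\tilde S})$ yields $\lambda_{\min}\le 3\lambda_{\min}(P_{\tilde S})$ at \emph{whatever} round the loop exits. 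This is exactly how the paper's proof of Theorem \ref{thm:CORE-SET_appendix} closes the argument; you should replace your final three-inequality chain with this stopping-rule-based one.
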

\begin{lemma}\label{lemma:uniform_lambda_bounds}
    Suppose we sample each of the $L$ protected vectors using an orthonormal set of actions $T$ times for a total of $dLT$ isotropic samples. Then we have 
    $$||\hat{\theta}_i-\theta_i||_2 \le 2R\sqrt{2\frac{d\log 6+\log\frac{1}{\delta}}{T}}$$ and thus
    $$||(\hat{\theta}_i-\theta_i)(\hat{\theta}_i-\theta_i)^T||_2\le 8R^2\frac{d\log 6+\log\frac{1}{\delta}}{T}$$with probability at least $1-L\delta$ for every $i\in [L]$.
\end{lemma}
\begin{proof}
From (20.3) of \cite{LatSze20}, each estimate $\hat{\theta}_i$ of $\theta_i$ satisfies with probability at least $1- L\delta$
    \begin{align*}
    &||\hat{\theta}_i-\theta_i||_{V_{i, Td}}
    \le 2R\sqrt{2(d\log 6+\log\frac{1}{\delta})}\\
    \implies &\sqrt{\langle \hat{\theta}_i-\theta_i, V_{i, Td}\big(\hat{\theta}_i-\theta_i\big)\rangle} \le 2R\sqrt{2(d\log 6+\log\frac{1}{\delta})}\\
    \implies &\sqrt{T}||\hat{\theta}_i-\theta_i||_2 \le 2R\sqrt{2(d\log 6+\log\frac{1}{\delta})} && \text{by $V_{i, Td}=TI$ by construction}\\
    \implies &||\hat{\theta}_i-\theta_i||_2 \le 2R\sqrt{2\frac{d\log 6+\log\frac{1}{\delta}}{T}}\\
    \implies &||(\hat{\theta}_i-\theta_i)(\hat{\theta}_i-\theta_i)^T||_2\le 8R^2\frac{d\log 6+\log\frac{1}{\delta}}{T}&&\text{by $||vv^T||_2 = v^Tv$ for any column vector $v$}
    \end{align*}
\end{proof}
\begin{lemma}\label{lemma:PPT_perturbation}If we run CORE-SET for $T$ iterations of the outer loop, then for any $S\subseteq [L]$ we have
    $$||\hat{P}_S-P_S||_2\le8LR(M+R)\frac{d\log 6+\log\frac{1}{\delta}}{\sqrt{T}}$$ with probability $1-L\delta$.
\end{lemma}
\begin{proof}
    This follows from explicit lower bounds we get for exploration from CORE-SET. With probability $1-\delta$:
    \begin{align*}
        &||\hat{P}_S-P_S||_2\\
        & = ||\sum_{i\in [L]}(\hat{\theta}_i\hat{\theta}_i^T-\theta_i\theta_i^T)||_2\\
        & = ||\sum_{i\in [L]}(\hat{\theta}_i(\hat{\theta}_i-\theta_i)^T+(\hat{\theta}_i-\theta_i)\theta_i^T)||_2\\
        & \le ||\sum_{i\in [L]}\theta_i(\hat{\theta}_i-\theta_i)^T||_2+||\sum_{i\in [L]}(\hat{\theta}_i-\theta_i)\theta_i^T||_2\\
        &\quad\quad\quad\quad +||\sum_{i\in [L]}(\hat{\theta}_i-\theta_i)(\hat{\theta}_i-\theta_i)^T||_2\\
        & \le 2\sum_{i\in [L]}||\theta_i||_2||\hat{\theta}_i-\theta_i||_2 +||\sum_{i\in [L]}(\hat{\theta}_i-\theta_i)(\hat{\theta}_i-\theta_i)^T||_2&\text{by submultiplicativity 
        of $||\cdot||_2$ norm}\\
        & \le 2\sum_{i\in [L]}||\theta_i||_2||\hat{\theta}_i-\theta_i||_2+\sum_{i\in [L]}||(\hat{\theta}_i-\theta_i)(\hat{\theta}_i-\theta_i)^T||_2&\text{by triangle inequality}\\
        & \le 4\sum_{i\in [L]}||\theta_i||_2R\sqrt{2\frac{d\log 6+\log\frac{1}{\delta}}{T}}+\sum_{i\in [L]}8R^2\frac{d\log 6+\log\frac{1}{\delta}}{T}&\text{by Lemma \ref{lemma:uniform_lambda_bounds}}\\
        & \le 4LMR\sqrt{2\frac{d\log 6+\log\frac{1}{\delta}}{T}}+8LR^2\frac{d\log 6+\log\frac{1}{\delta}}{T}&\text{by $||\theta_i||_2\le M$}\\
        & \le 8LR(M+R)\frac{d\log 6+\log\frac{1}{\delta}}{\sqrt{T}}&\text{by $T, d, \frac{1}{\delta}, L\ge1$}
    \end{align*}
\end{proof}

We also have the following eigenvalue perturbation result.
\begin{lemma}\label{lemma:eig_perturbation}
    Let $\lambda_{\min}(P)$ denote the minimum eigenvalue of symmetric matrix $P$ with $P\in \mathbb{R}^{d\times d}$, and consider a symmetric noise matrix $E\in \mathbb{R}^{d\times d}$. Then $$\lambda_{\min}(P+E)\ge \lambda_{\min}(P)-||E||_2.$$
\end{lemma}

\begin{proof}
Let $\argmin_{v: \lVert v \rVert_2 =1} v^T (P+E) v = \hat{v}$
\begin{align}
    \lambda_{\mathrm{min}} (P+E) &= \min_{v: \lVert v \rVert_2 =1} v^T (P+E) v  \nonumber \\
    \hfill & =  \hat{v}^T P \hat{v} +  \hat{v}^T E \hat{v} \nonumber \\
    \hfill & \overset{a}{\geq}  \lambda_{\mathrm{min}} (P) +  \hat{v}^T E \hat{v} \nonumber \\
    \hfill & \overset{b}{\geq} \lambda_{\mathrm{min}} (P) -  \lVert E \rVert_2 \nonumber
\end{align}
(a): Definition of Rayleigh Quotient applied to the symmetric matrix $P$. (b): for any $v: \lVert v \rVert_2=1$, by Cauchy Schwartz, $ \lvert v^T E v \rvert \leq \lVert v \rVert_2 \lVert E v \rVert_2$. For any scalar $a$, $|a|< c$ for some $c>0$, then $a \geq -c$.
%
%
\end{proof}

\begin{proof}[Proof of Theorem \ref{thm:CORE-SET}]
We will use the shorthand $\alpha$ to denote the constant $8LMR(M+R)\big(d\log 6+\log\frac{1}{\delta}\big)$ in Lemma \ref{lemma:PPT_perturbation}, so that we have $||\hat{P}_S-P_S||_2\le \frac{\alpha}{\sqrt T}$ after $T$ iterations of the outer loop. Also, the termination condition for the algorithm is now $\lambda_{\min}(\hat{P}_{S'})\ge 2\frac{\alpha}{\sqrt{T}}$ for some $S'\subseteq [L]$.

Take $S = \argmax_{S\subseteq [L], |S|=s} \lambda_{\min}(P_S)$. By Lemma \ref{lemma:eig_perturbation}, after $T=\frac{9\alpha^2}{\lambda^2_{\min}(P_S)}$ rounds, we have with probability at least $1-L\delta$ that $$\lambda_{\min}(P_S)-\lambda_{\min}(\hat{P}_S)\le \frac{\alpha}{\sqrt{\frac{9\alpha^2}{\lambda^2_{\min}(P_S)}}}= \frac{\lambda_{\min}(P_S)}{3}\implies \lambda_{\min}(\hat{P}_S)\ge \frac{2}{3}\lambda_{\min}(P_S)\ge \frac{2\alpha}{\sqrt{T}}.$$On the other hand, because the algorithm hasnt terminated, we must have $\lambda_{\min}(\hat{P}_S)\le 2\frac{\alpha}{\sqrt{T}}$. Since this contradicts the termination condition, we know that the algorithm terminates in no more than $\frac{9\alpha^2}{\lambda_{\min}(P_S)^2}$ rounds of the outer loop.

Suppose whenever we terminate we output $\tilde{S}\subset [L], \tilde{S}=\argmax_{S'\in [L]}\lambda_{\min}(\hat{P}_{S'})$. Then we have $\lambda_{\min}(P_{S'})\ge \lambda_{\min}(\hat{P}_{S'})-\frac{\alpha}{\sqrt{T}}\ge \frac{1}{2}\lambda_{\min}(\hat{P}_{S'})$. We have 
\begin{align*}
\lambda_{\min}(P_S)
&\le \lambda_{\min}(\hat{P}_S)+\frac{\alpha}{\sqrt{T}} &&\text{by Lemma \ref{lemma:PPT_perturbation} on $P_S$}\\
&\le \lambda_{\min}(\hat{P}_{\tilde{S}})+\frac{\alpha}{\sqrt{T}}&&\text{by $\tilde{S}=\argmax_{S'\in [L]}\lambda_{\min}(\hat{P}_{S'})$}\\
&\le \lambda_{\min}(P_{\tilde{S}})+2\frac{\alpha}{\sqrt{T}}&&\text{by Lemma \ref{lemma:PPT_perturbation} on $P_{\tilde{S}}$}\\
&\le\frac{3}{2}\lambda_{\min}(\hat{P}_{\tilde{S}})&&\text{by termination condition}\\
&\le 3\big(\lambda_{\min}(\hat{P}_{\tilde{S}})-\frac{\alpha}{\sqrt{T}}\big)&&\text{by $\lambda_{\min}(\hat{P}_{\tilde{S}})\ge 2\frac{\alpha}{\sqrt{T}}$ from the termination condition}\\
&\le 3\lambda_{\min}(P_{\tilde{S}})&&\text{by Lemma \ref{lemma:PPT_perturbation} on $P_{\tilde{S}}$}
\end{align*}
In summary, with probability $1-L\delta$, this procedure terminates in no more than $\frac{576L^2M^2R^2(M+R)^2\big(d\log 6+\log\frac{1}{\delta}\big)^2}{\lambda_{\min}^2}$ and outputs $\tilde{S}$ with $\lambda_{\min}(P_{\tilde{S}})\ge \frac{1}{3}\lambda_{\min}$.
\end{proof}

\subsection{known $\lambda_{\min}$, unknown $s$}\label{appendix:unknown_s}
In Algorithm \ref{algorithm:CORE-SET_alternative}, we modify CORE-SET to accept a threshold $\lambda_{\min}$ rather than known $s$ as discussed in Remark \ref{remark:dimension_known}.

\begin{algorithm}[th!]
   \caption{CORE-SET for known $\lambda_{\min}$}
   \label{algorithm:CORE-SET_alternative}
\begin{algorithmic}
\STATE $t \leftarrow 1$\;
\WHILE{$8LR(M+R)\frac{d\log 6+\log\frac{1}{\delta}}{\sqrt{T}}\le \lambda_{\min}$}
    \FOR{$i\in [d]$}
        \FOR{$p\in [L]$}
            \STATE Play $(e_i, \theta_p)$, observe $x$\;
            \STATE Update $\Theta_i$ with $(e_i, x)$\;
        \ENDFOR
    \ENDFOR
    \STATE $t\leftarrow t+1$\;
\ENDWHILE
\STATE $k\leftarrow \min{j: \lambda_{j}(\sum_{i\in [L]}\hat{\theta}_i\hat{\theta}_i^\top)\ge\lambda_{\min}}$\;
\COMMENT{$\lambda_j(M)$ denotes the $j$th smallest eigenvalue of $M$}
\STATE return $(\argmax_{S \subseteq [L], |S|=k} \lambda_{\min}(\sum_{i\in [L]}\hat{\theta}_i\hat{\theta}_i^\top), t)$
\end{algorithmic}
\end{algorithm}
In the same way as before, this ensures that $|\lambda_{i}(\sum_{i\in [L]}\hat{\theta}_i\hat{\theta}_i^\top )-\lambda_{i}(\sum_{i\in [L]}\theta_i\theta_i^\top )|\le \lambda_{\min}$, so a zero singular value cannot be mistaken for non-zero or vice-versa. We compute first the rank of the protected space in $k$ and then find the best subset of protected vectors to span that space.
\section{Regret upper bound for Theorem \ref{thm:known_subspace_dim}}\label{appendix:regretbound}
\begin{theorem}
If we have $\mathcal{A}_t=\mathcal{B}_2^d$, the regret of Algorithm \ref{algorithm:protectedLUCB} satisfies
\small
\begin{align*}
    \mathcal{R}_{[T]} &\le 2\sqrt{((\frac{3\sqrt{s}}{\lambda_{\min}}M+1)^2\beta_T(\frac{\delta}{2(L+1)})+M^2)(s+1)Td\log (1+\frac{TL}{d\rho})}+\frac{1152L^3MR^2(M+R)^2d\big(d\log 6+\log\frac{2L}{\delta}\big)^2}{\lambda_{\min}^2}
\end{align*} with probability $1-\delta$ where $$\sqrt{\beta_t(\delta)}=R\sqrt{d\log((1+\frac{tM^2}{\rho})/\delta)}+M\rho^{\frac{1}{2}}.$$
\end{theorem}
As a reminder, we use the notation $f(a, \{\tilde{\theta}_i\})=\langle a, \Proj_{\{\tilde{\theta}_i\}}^{\perp}\tilde{\theta}_0\rangle$, $(a_t, \{\overline{\theta}_i\})=\argmax_{a\in \mathcal{A}, \{\tilde{\theta}_i \in \Theta_i\}_{i\in \tilde{S}}} f(a, \{\tilde{\theta}_i\})$ and $i_t=\argmax_{i\in \tilde{S}} ||a_t||_{V_{i, t}^{-1}}\sqrt{\beta_{|T_{i, t}|}}$. We take $\tilde{S}$ to be the coreset returned by CORE-SET procedure, which satisfies $\lambda_{\min}(\tilde{S})\ge \frac{1}{3}\lambda_{\min}$. We use $\{y_i\}$ as a shorthand for $\{y_i\}_{i\in \tilde{S}}$.
\begin{property}\label{property:self_adjoint} If $\Proj^{\perp}a=a$ then for all $b$, beacause $\Proj^\perp$ is self-adjoint and idempotent, we have $\langle a, \Proj^{\perp}b\rangle = \langle \Proj^{\perp}a, b\rangle = \langle a, b\rangle$. 
\end{property}
\begin{lemma}\label{lemma:maximizer_perp}
    If $\mathcal{A}$ is $\mathcal{B}_2^d$, $a_t\in \mathcal{B}_2^d$ satisfies $\Proj_{\{\overline{\theta}_i\}}^{\perp}a_t = a_t$.
\end{lemma}
\begin{proof}
Consider the action $a'=\frac{\Proj_{\{\overline{\theta}_i\}}^{\perp}a_t}{||\Proj_{\{\overline{\theta}_i\}}^{\perp}a_t||_2}$. This satisfies 
\begin{align*}
f(a', \{\tilde{\theta}_i\})=\langle a', \Proj_{\{\overline{\theta}_i\}}^{\perp}\overline{\theta}_0\rangle 
&= \frac{1}{||\Proj_{\{\overline{\theta}_i\}}^{\perp}a_t||_2}\langle \Proj_{\{\overline{\theta}_i\}}^{\perp}a_t, \Proj_{\{\overline{\theta}_i\}}^{\perp}\overline{\theta}_0\rangle \\
&=\frac{1}{||\Proj_{\{\overline{\theta}_i\}}^{\perp}a_t||_2}\langle a, \Proj_{\{\overline{\theta}_i\}}^{\perp}\overline{\theta}_0\rangle &&\text{by Property \ref{property:self_adjoint}}\\
&\ge \langle a_t, \Proj_{\{\overline{\theta}_i\}}^{\perp}\overline{\theta}_0\rangle=f(a_t, \{\overline{\theta}\}) &&\text{because $\lVert\Proj_{\{\overline{\theta}_i\}}^{\perp}a_t\rVert_2<1$}
\end{align*}
Because $a_t$ is optimal, we must have equality.
$$||\Proj_{\{\theta_i\}_{i\in [L]}}^{\perp}a_t||_2=||a_t||_2\implies \Proj_{\{\overline{\theta}_i\}}^{\perp}a_t = a_t.$$
\end{proof}

\begin{property}\label{property:proj_is_zero}
For any choice of $\{x_i\}\in \mathbb{R}$, we have $\Proj_{\{\overline{\theta_i}\}}^{\perp}\sum_{i\in \tilde{S}} \overline{\theta}_i x_i=0$
\end{property}
This is true since $\Proj_{\{\overline{\theta_i}\}}^{\perp} \overline{\theta}_i=0$ for all $i\in \tilde{S}$.
\begin{property}\label{property:opt}Because $a_t$ is the optimistic action in Equation~\ref{eqn:optimistic_optimizer}, we have $\langle a_t, \Proj_{\{\overline{\theta_i}\}}^{\perp}\overline{\theta}_0\rangle \ge \langle a_*, \Proj_{\{\theta_i\}}^{\perp}\theta_0\rangle$.
\end{property}

\begin{lemma}\label{lemma:delta_upperbound}
Suppose at time $t$ a player plays $(a_t, i_t)$. Then the suboptimality $\Delta_{a_t}$ is upper bounded as 
$$\Delta_{a_t}\le 2(3\frac{\sqrt{s}}{\lambda_{\min}}M+1)||a_t||_{V_{i_t, t}^{-1}}\sqrt{\beta_{T}(\delta)}$$
\end{lemma}
\begin{proof}
We denote by $V_{i, t}$ a matrix that represents the extent of exploration with the $i$th vector, $V_{i, t} = \sum_{s\le t} \mathbbm{1}_{i_t = i}a_sa_s^T$, and by $V_t$ the covariance of the exploration, $V_t = \sum_{s\le t} a_sa_s^T$. We will denote by $T_{i, t}$ the number of times $i$ has been queried upto and including time $t$, so $T_{i, t}=\sum_{s\le t}\mathbbm{1}_{I_s=i}$.

Because the $\theta_i, i\in \tilde{S}$ are a basis for the protected space, we can write $\Proj_{\{\theta_i\}}\theta_0$ as
\begin{equation}\label{eqn:proxy}\Proj_{\{\theta_i\}}\theta_0 = \sum_{i\in {\tilde{S}}} \theta_i x_i
\end{equation}
for unique $x_i\in \mathbb{R}$.  

The suboptimality of an action is upper bounded as follows: 
\begin{align*}
    \Delta_{a_t} &= \langle a_*-a_t, \Proj_{\{\theta_i\}}^{\perp}\theta_0\rangle\\
    & \le \langle a_t, \Proj_{\{\overline{\theta_i}\}}^{\perp}\overline{\theta}\rangle-\langle a_t, \Proj_{\{\theta_i\}}^{\perp}\theta_0\rangle &&\text{by Property \ref{property:opt}}\\
    & = \langle a_t, \Proj_{\{\overline{\theta_i}\}}^{\perp}(\overline{\theta}-\theta_0)\rangle+\langle a_t, (\Proj_{\{\theta_i\}}-\Proj_{\{\overline{\theta_i}\}})\theta_0\rangle && \text{by $\Proj^\perp x = (I-\Proj)x$}\\
    & = \langle \Proj_{\{\overline{\theta_i}\}}^{\perp}a_t, (\overline{\theta}-\theta_0)\rangle+\langle a_t, (\Proj_{\{\theta_i\}}-\Proj_{\{\overline{\theta_i}\}})\theta_0\rangle &&\text{by Property \ref{property:self_adjoint}}\\
    & = \langle \Proj_{\{\overline{\theta_i}\}}^{\perp}a_t, (\overline{\theta}-\theta_0)\rangle+\langle \Proj_{\{\overline{\theta_i}\}}^\perp a_t, (\Proj_{\{\theta_i\}}-\Proj_{\{\overline{\theta_i}\}})\theta_0\rangle&&\text{by Lemma \ref{lemma:maximizer_perp}}\\
    & = \langle \Proj_{\{\overline{\theta_i}\}}^{\perp}a_t, (\overline{\theta}-\theta_0)\rangle+\langle a_t, \Proj_{\{\overline{\theta_i}\}}^\perp(\Proj_{\{\theta_i\}}-\Proj_{\{\overline{\theta_i}\}})\theta_0\rangle&&\text{by Property \ref{property:self_adjoint}}\\
    & = \langle a_t, (\overline{\theta}-\theta_0)\rangle+\langle a_t, \Proj_{\{\overline{\theta_i}\}}^{\perp}\Proj_{\{\theta_i\}}\theta_0\rangle&&\text{by $\Proj^\perp \Proj x=0$}\\
    &= \langle a_t, (\overline{\theta}-\theta_0)\rangle+\langle a_t, \Proj_{\{\overline{\theta_i}\}}^{\perp}\sum_{i\in \tilde{S}} \theta_i x_i\rangle &&\text{by \eqref{eqn:proxy}}\\
    &= \langle a_t, (\overline{\theta}-\theta_0)\rangle+\langle a_t, \Proj_{\{\overline{\theta_i}\}}^{\perp}\big(\sum_{i\in \tilde{S}} \big(\theta_i-\overline{\theta}_i\big) x_i+\sum_{i\in \tilde{S}} \overline{\theta}_i x_i\big)\rangle \\
    &= \langle a_t, (\overline{\theta}-\theta_0)\rangle+\langle a_t, \Proj_{\{\overline{\theta_i}\}}^{\perp}\sum_{i\in \tilde{S}} \big(\theta_i-\overline{\theta}_i\big) x_i\rangle &&\text{by Property \ref{property:proj_is_zero}}\\
    &= \langle a_t, (\overline{\theta}-\theta_0)\rangle+\langle a_t, \sum_{i\in \tilde{S}} \big(\theta_i-\overline{\theta}_i\big) x_i\rangle &&\text{by Property \ref{property:self_adjoint} and Lemma \ref{lemma:maximizer_perp}}\\
    &= \langle a_t, (\overline{\theta}-\theta_0)\rangle+\sum_{i\in \tilde{S}}\big(\langle a_t,  \theta_i-\overline{\theta}_i\rangle x_i\big)\\
    &\le \langle a_t, (\overline{\theta}-\theta_0)\rangle+\max_{i\in \tilde{S}}\langle a_t, \theta_i-\overline{\theta}_i\rangle \sum_{i\in \tilde{S}} |x_i|\\
    &\le ||a_t||_{V_{0, t}^{-1}} ||\theta_i-\overline{\theta}_i||_{V_{0, t}}+||x_{\tilde{S}}||_1\max_{i\in \tilde{S}}||a_t||_{V_{i, t}^{-1}} ||\theta_i-\overline{\theta}_i||_{V_{i, t}} &&\text{by Cauchy-Schwartz}
\end{align*}
So \begin{equation}\label{eqn:delta_upperbound}\Delta_{a_t}\le ||a_t||_{V_{0, t}^{-1}}||\overline{\theta}-\theta_0||_{V_{0, t}}+\max_{i\in \tilde{S}}||a_t||_{V_{i, t}^{-1}}||\overline{\theta}_i-\theta_i||_{V_{i, t}}||x_{\tilde{S}}||_1\end{equation}From a union bound, we know that with probability $1-(L+1)\delta$, equation \eqref{eq:confidence_set} holds for all $i\in [L]\cup \{0\}$. Because the coreset efficiently spans the protected space, we have
\begin{align*}
||x_{\tilde{S}}||_1
& \le \sqrt{|\tilde{S}|}||x_{\tilde{S}}||_2 &&\text{by Cauchy-Schwartz}\\
&\le \sqrt{s}\frac{1}{\lambda_{\min}(\tilde{S})}||\sum_{i\in {\tilde{S}}} \theta_i x_i||_2 &&\text{by $||\sum_{i\in {\tilde{S}}} \theta_i x_i||_2\ge \lambda_{\min}(\tilde{S})||x||_2$}\\
& = \sqrt{s}\frac{1}{\lambda_{\min}(\tilde{S})}||\Proj_{\{\theta_i\}}\theta_0||_2\\
&\le \sqrt{s}\frac{3}{\lambda_{\min}}||\theta_0||_2&&\text{$\lambda_{\min}(\tilde{S})\ge \frac{1}{3}\lambda_{\min}$ by Theorem \ref{thm:CORE-SET_appendix}, $\Proj$ a contraction}
\end{align*}
So \begin{equation}\label{eqn:l1_bound}||x_{\tilde{S}}||_1\le \sqrt{s}\frac{3}{\lambda_{\min}}||\theta_0||_2\end{equation} The index of the query chosen alongside $a_t$ is chosen to be the one such that for all $t$, we have \begin{equation}\label{eqn:max_uncertainty}||a_t||_{V_{i_t, t}^{-1}}\sqrt{\beta_{T_{i_t, t}}(\delta)} \ge ||a_t||_{V_{i, t}^{-1}}\sqrt{\beta_{T_{i, t}}(\delta)}~\forall i\in \{0\}\cup \tilde{S}\end{equation} 
Geometrically, this is the index corresponding to the vector that is least understood in the chosen direction, since an upper bound on the radius of a confidence ellipsoid for vector $\theta_i$ in direction $a_t$ is given by $||a_t||_{V_{i_t, t}^{-1}}\sqrt{\beta_{T_{i_t, t}}(\delta)}$. 

This allows us to upper bound $\Delta_{a_t}$ in terms of the history as  \begin{align*}
\Delta_{a_t}
&\le ||a_t||_{V_{0, t}^{-1}}||\overline{\theta}-\theta_0||_{V_{0, t}}+\max_{i\in \tilde{S}}||a_t||_{V_{i, t}^{-1}}||\overline{\theta}_i-\theta_i||_{V_{i, t}}||x_{\tilde{S}}||_1&&\text{by \eqref{eqn:delta_upperbound}}\\
&\le 2||a_t||_{V_{0, t}^{-1}}\sqrt{\beta_{T_{0, t}}(\delta)}+2\max_{i\in [L]}||a_t||_{V_{i, t}^{-1}}\sqrt{s\beta_{T_{i, t}}(\delta)}\frac{3}{\lambda_{\min}}||\theta_0||_2 &&\text{by \eqref{eq:confidence_set} and \eqref{eqn:l1_bound}}\\
&\le 2(3\frac{\sqrt{s}}{\lambda_{\min}}M+1)||a_t||_{V_{i_t, t}^{-1}}\sqrt{\beta_{T_{i_t, t}}(\delta)}\le 2(3\frac{\sqrt{s}}{\lambda_{\min}}M+1)||a_t||_{V_{i_t, t}^{-1}}\sqrt{\beta_{T}(\delta)}&&\text{by \eqref{eqn:max_uncertainty}, and $T_{i, t}\le T$}
\end{align*}
\end{proof}
The following is a standard identity for the covariance matrices.
\begin{lemma}\label{lemma:covariance_logdet}
    The $\log \det$ of the covariance matrix satisfies 
    $$\log \det V_{i, T}
    = d\log \lambda+\sum_{t\le T}\mathbbm{1}_{i_t=i}\log (1+||a_t||^2_{V_{i, T_{i, t-1}}^{-1}})$$
\end{lemma}
\begin{proof}
This follows from Sylvester's identity~\cite{sylvester1851xxxvii},  
\begin{align*}
\log \det V_{i, t}
&=\log\det(V_{i, t-1}+a_ta_t^T)\\
&=\log\det V_{i,t-1}+\log\det (I+a_ta_t^TV_{i, t-1}^{-1})\\
&=\log\det V_{i,t-1}+\log\det (I+a_t^TV_{i, t-1}^{-1}a_t)\\
&=\log\det V_{i,t-1}+\log\det (I+||a_t||^2_{V_{i, t-1}^{-1}})
\end{align*}
\end{proof}
There is also a simple upper bound on the $\log\det$ that comes from the arithmetic-geometric means inequality.
\begin{lemma}\label{lemma:logdet_upperbound}
    $$\log \det V_{i, T} \le d\log (\rho+\frac{T_{i, T}L}{d})$$
\end{lemma}
\begin{proof}
Let $\lambda_j(V_{i, t})$ denote the eigenvalues of $V_{i, t}$.
\begin{align*}
    \log \det V_{i, T} 
    = \log \prod_{j\le d} \lambda_j(V_{i, t}) \le d\log \frac{\sum_{j\le d} \lambda_j(V_{i, t})}{d} 
    =d\log \frac{\Tr V_{i, t}}{d}\le d\log (\rho+\frac{|T_i|L}{d}).
\end{align*} 
\end{proof}

\begin{proof}[Proof of Theorem \ref{appendix:regretbound}]
We first get an upper bound for the regrets separately for times in which each of the protected vectors are queried
\begin{align*}
    &\log \det V_{i, T}\\
    &= d\log \lambda+\sum_{t\le T}\mathbbm{1}_{i_t=i}\log (1+||a_t||^2_{V_{i, T_{i, t-1}}^{-1}})&&\text{by Lemma \ref{lemma:covariance_logdet}}\\
    &\ge d\log \lambda+\sum_{t\le T}\mathbbm{1}_{i_t=i}\log (1+\frac{\Delta_{a_t}^2}{4(3\frac{\sqrt{s}}{\lambda_{\min}}M+1)^2\beta_{T}(\delta)})&&\text{by Lemma \ref{lemma:delta_upperbound}}\\
    &\ge d\log \rho + \sum_{t\le T}\mathbbm{1}_{i_t=i}\frac{\Delta_{a_t}^2}{4(3\frac{\sqrt{s}}{\lambda_{\min}}M+1)^2\beta_T(\delta)+4M^2} &&\text{by $\log (1+x)\ge \frac{x}{1+x}$, also $\Delta_{a_t}\le 2M$ }\\
    &\ge d\log \rho + \frac{\big(\sum_{t\le T}\mathbbm{1}_{i_t=i}\Delta_{a_t}\big)^2}{4T_{i, T}((3\frac{\sqrt{s}}{\lambda_{\min}}M+1)^2\beta_T(\delta)+M^2)}&&\text{by Cauchy-Schwartz}
\end{align*}

Writing this as an upper bound on the sub-optimality, we have:
\begin{align*}
    \sum_{t\le T}\mathbbm{1}_{i_t=i}\Delta_{a_t}\le  \sqrt{4T_{i, T}((3\frac{\sqrt{s}}{\lambda_{\min}}M+1)^2\beta_T(\delta)+M^2)\big(\log \det V_{i, T}-d\log \rho\big)}
\end{align*}
We can now combine the regrets from each of the protected vectors:
\begin{align}\label{eqn:regret_upperbound_full}
    \sum_{t\le T} \Delta_{a_t}
    &=\sum_{i\in \{0\}\cup \tilde{S}} \sum_{t\le T} \mathbbm{1}_{I_t=i}\Delta_{a_t}\nonumber\\
    &\le \sum_{i\in \{0\}\cup\tilde{S}}\sqrt{4T_{i, T}((3\frac{\sqrt{s}}{\lambda_{\min}}M+1)^2\beta_T(\delta)+M^2)(\log \det V_{i, T}-d\log \rho)}\nonumber\\
    &\le \sum_{i\in \{0\}\cup\tilde{S}}\sqrt{4T_{i, T}((3\frac{\sqrt{s}}{\lambda_{\min}}M+1)^2\beta_T(\delta)+M^2)d\log (1+\frac{T_{i, T}L}{d\rho})} && \text{by Lemma \ref{lemma:logdet_upperbound}}
\end{align}

In each round, we concentrate our querying only on the vectors in the core set $\tilde{S}$ with $|\tilde{S}|=s$, so we have the first equality below. Next applying Cauchy-Schwartz inequality we obtain the result.
\begin{align*}
    \mathcal{R}_{[T]}
    &=\sum_{i\in \{0\}\cup [L]}\sum_{t\in [T]}\mathbbm{1}_{I_t=i}\Delta_{a_t}=\sum_{i\in \{0\}\cup \tilde{S}}\sum_{t\in [T]}\mathbbm{1}_{I_t=i}\Delta_{a_t}\\
    &\le 2\sqrt{((\frac{3\sqrt{s}}{\lambda_{\min}}M+1)^2\beta_T(\delta)+M^2)(s+1)Td\log (1+\frac{TL}{d\rho})}
\end{align*}
From \cite{yadkori2011OFUL} we know that with probability $1-\delta$, the confidence ellipsoids constructed contain the true parameters for each individual vector for $\beta_T(\delta) = \big(R\sqrt{d\log((1+\frac{TM^2}{\rho})/\delta)}+\rho M^{\frac{1}{2}}\big)^2$. By a union bound, with probability $1-L\delta$, these inequalities hold for all $L$ simultaneously.

Finally, we must add the regret accrued during the initial core-set estimation phase. By Theorem \ref{thm:CORE-SET_appendix}, with probability $1-\frac{\delta}{2}$, this phase lasts at most $\frac{576L^3R^2(M+R)^2d\big(d\log 6+\log\frac{2L}{\delta}\big)^2}{\lambda_{\min}^2}$ rounds, and adds at most $\frac{1152L^3MR^2(M+R)^2d\big(d\log 6+\log\frac{2L}{\delta}\big)^2}{\lambda_{\min}^2}$ to the regret. The rest of the rounds accrue a regret of at most $2\sqrt{((\frac{3\sqrt{s}}{\lambda_{\min}}M+1)^2\beta_T(\frac{\delta}{2(L+1)})+M^2)(s+1)Td\log (1+\frac{TL}{d\rho})}$ with probability $1-\frac{\delta}{2}$. So the total regret is upper bounded by $$2\sqrt{((\frac{3\sqrt{s}}{\lambda_{\min}}M+1)^2\beta_T(\frac{\delta}{2(L+1)})+M^2)(s+1)Td\log (1+\frac{TL}{d\rho})}+\frac{1152L^3MR^2(M+R)^2d\big(d\log 6+\log\frac{2L}{\delta}\big)^2}{\lambda_{\min}^2}$$ with probability at least $1-\delta$.
\end{proof}

\section{Proof of Regret Lower Bound (Theorem~\ref{thm:hardness})}\label{appendix:KL_upperbound}
\begin{theorem}\label{appendix:hardness}
There is an instance of the Protected Linear Bandit problem such that any algorithm incurs a regret of $\Omega(T^{\frac{3}{4}})$.
\vspace{-1em}
\end{theorem}
\begin{proof}
Consider a pair of instances, denoted with superscripts $(1)$ and $(2)$. For both, we set our ambient space to have dimension $d=2$, and set $s=L=1$. We denote by $u_\alpha\in \mathbb{R}^2$ the vector $(\cos \alpha, \sin \alpha)$. Take $\alpha = T^{-\frac{1}{4}}$. We set $\theta^{(1)}_0=\theta^{(2)}_0=u_{\frac{\pi}{2}-\alpha}$. In instance $(1)$, we set $\theta^{(1)}_1=u_0$ while in instance $(2)$, we set $\theta^{(2)}_1=u_{-\alpha}$. In both instances, in each round, we allow the player an action space that consists of either the actions $\{u_{\pi-\alpha}, u_{2\alpha}\}$ or $\{u_{\pi-\alpha}, u_{2\alpha}, u_{\pi-3\alpha}\}$ with equal probability. 

We will denote by $\Pi_1, \Pi_2$ respectively the orthogonal projection for $\theta_1$ in instance $(1)$ and $(2)$. We have $$\Pi_1=\begin{pmatrix} 0&0\\0&1\end{pmatrix}, \quad\quad \Pi_2 = \begin{pmatrix} \sin^2 \alpha&-\frac{1}{2}\sin (2\alpha)\\-\frac{1}{2}\sin (2\alpha)&\cos^2 \alpha \end{pmatrix}.$$ 

Note that for $x<1$, we have $x\ge \sin x\ge \frac{5}{6}x$ and $1\ge 1-\cos x\ge 1-x$, and for all $x$, $\cos x\ge 1-\frac{x^2}{2}$. We will assume $T\ge 256\implies 0\le\alpha\le\frac{1}{4}$.

In the first instance, action $u_{\pi-\alpha}$ has reward $\langle u_{\frac{\pi}{2}-\alpha}, \Pi_1u_{\pi-\alpha}\rangle=\sin \alpha\cos\alpha$, action $u_{2\alpha}$ has reward $\langle u_{\frac{\pi}{2}-\alpha}, \Pi_1u_{2\alpha}\rangle=\sin(2\alpha)\cos \alpha$, and $u_{\pi-3\alpha}$ has reward $\langle u_{\frac{\pi}{2}-\alpha}, \Pi_1u_{\pi-3\alpha}\rangle=\sin(3\alpha)\cos\alpha$. When available, $u_{\pi-3\alpha}$ is the optimal action. When it is not available, $u_{2\alpha}$ is optimal. In either cases, the sub-optimality gap is at least $\cos\alpha (\sin(3\alpha)-\sin(2\alpha))\ge 1-\alpha$. So $\cos\alpha (\sin(3\alpha)-\sin(2\alpha))\ge (1-\frac{2}{\pi}\alpha)(\frac{1}{2}\alpha)\ge\frac{1}{4}\alpha$ for small enough $\alpha\le \frac{1}{4}$.

In the second instance, action $u_{\pi-\alpha}$ has reward $\langle u_{\frac{\pi}{2}-\alpha}, \Pi_2u_{\pi-\alpha}\rangle=0$, action $u_{2\alpha}$ has reward $\langle u_{\frac{\pi}{2}-\alpha}, \Pi_2u_{2\alpha}\rangle=\sin(3\alpha)$, and $u_{\pi-3\alpha}$ has reward $\langle u_{\frac{\pi}{2}-\alpha}, \Pi_2u_{\pi-3\alpha}\rangle=\sin(2\alpha)$. Here, $u_{2\alpha}$ is always optimal. The sub-optimality gap is at least $\sin(3\alpha)-\sin(2\alpha)\ge \frac{1}{2}\alpha$ for $\alpha\le \frac{1}{4}$.

Denote by $A$ the event in which the action $u_{\pi-3\alpha}$ is played fewer than half the times it is observed. Let $\Pr_1(A)$ denote the probability of event $A$ under the distribution induced by interaction of the algorithm with instance $1$, and let $\Pr_2(A^c)$ denote the probability of playing action $u_{\pi-3\alpha}$ at least half the times it is observed under the distribution induced by interaction with instance $(2)$. The regret of the algorithm on instance $(1)$ is then at least $R_1\ge T^{-\frac{1}{4}}\frac{T}{4}\Pr_1(A)$, while the regret on instance $(2)$ is at least $R_2\ge T^{-\frac{1}{4}}\frac{T}{4}\Pr_2(A^c)$. By the Bretagnolle-Huber inequality \cite{LatSze20}, we have $$\Pr_1(A)+\Pr_2(A^c)\ge \frac{1}{2}e^{-D(\Pr_1||\Pr_2)}.$$ So $$R_1+R_2\ge T^{-\frac{1}{4}}\frac{T}{4}\big(\Pr_1(A)+\Pr_2(A^c)\big)\ge \frac{1}{8}T^{\frac{3}{4}}e^{-D(\Pr_1||\Pr_2)}.$$ 
Finally, we can bound $D(\Pr_1||\Pr_2)$ in terms of the KL-divergences of the reward distributions in each round. We have $D(\Pr_1||\Pr_2)\le 21$, as proved below 
\begin{align*}
&D(\Pr_1||\Pr_2) 
= \E_1[\sum_{t\in [T]} D(P_{A^{(1)}_t, i_t, 1}||P_{A^{(1)}_t, i_t, 2})]\\
&= \sum_{t\in [T]} \Pr_1(I^{(1)}_t=0) \E_1[\langle A^{(1)}_t, \theta^{(1)}_{I^{(1)}_t}-\theta^{(2)}_{I^{(1)}_t}\rangle^2 |I^{(1)}_t=0] \\
&\quad+\sum_{t\in [T]} \Pr_1(I^{(1)}_t=1) \E_1[\langle A^{(1)}_t,\theta^{(1)}_{I^{(1)}_t}-\theta^{(2)}_{I^{(1)}_t}\rangle^2| I^{(1)}_t = 1]\\
&= \sum_{t\in [T]} \Pr_1(I^{(1)}_t=1) \E_1[\langle A^{(1)}_t,\theta^{(1)}_{I^{(1)}_t}-\theta^{(2)}_{I^{(1)}_t}\rangle^2| I^{(1)}_t = 1]\\
&\le \sum_{t\in [T]} \E_1[\langle A^{(1)}_t,u_0-u_{-\alpha}\rangle^2] \\
&\le T \langle u_{\pi-3\alpha}, u_0-u_{-\alpha}\rangle^2 \\\
&= \big(\cos(4\alpha)-\cos (3\alpha)\big)^2T\\
&\le \big(\frac{9\alpha^2}{2}\big)^2T\\
&\le 21\alpha^4 T = 21
\end{align*}

Thus we have $$R_1+R_2\ge \frac{1}{8}T^{\frac{3}{4}}e^{-21}$$ for $T\ge 256.$
This means that any algorithm performs poorly on at least one of the two instances $$\max\{R_1, R_2\}\geq \frac{e^{-21}}{16}T^{\frac{3}{4}}.$$

\end{proof}


\section{Baselines}

The pseudocode for various baselines discussed in the Section~\ref{sec:experiment} are provided below.

\subsection{Round Robin, $\epsilon_t$ greedy linUCB: }
\begin{algorithm}[th!]
   \caption{Round Robin, $\epsilon$ greedy linUCB}
   \label{algorithm:RRLinUCB}
\begin{algorithmic}
    \STATE {\bfseries Input:} protected subspace dimension $s$
    \STATE $t\leftarrow 1$\;
    \STATE $V_i\leftarrow \rho I_d$
    \STATE $l \leftarrow 1$
    \FOR{$t \in [T]$}
    \STATE $flag\leftarrow$ Bernoulli$(\epsilon_t)$
    \IF{flag}
    \STATE $l \leftarrow (l+1)\pmod L$
    \STATE $I_t\leftarrow (l+1)$
    \STATE $A_t = \argmax\limits_{a\in \mathcal{A}_t, \tilde{\theta}_{I_t}\in \Theta_{I_t}} \langle a, \tilde{\theta}_{I_t} \rangle$ \label{line:optimization-rr}
    \ELSE
    \STATE $(A_t, \{\overline{\theta}\}_i) = \argmax\limits_{a\in \mathcal{A}_t, \{\tilde{\theta}_i\in \Theta_i\}_{i\in \tilde{S}}, \tilde{\theta}_0\in \Theta_0} \langle a, \Proj_{\{\tilde{\theta}_i\}_{i\in \tilde{S}}}^{\perp}\tilde{\theta}_0 \rangle$\; \label{line:optim-rr-else}
    \STATE $I_t = 0$
    \ENDIF
    \STATE Play $(A_t, I_t)$, observe $X_t$\;
    \STATE Update $\Theta_{I_t}$ with $(A_t, X_t)$
    \STATE $V_{I_t}\leftarrow V_{I_t}+A_tA_t^\top$
    \STATE Increment $T_{I_t}$\;
    \ENDFOR
\end{algorithmic}
\end{algorithm}

\subsection{$\epsilon$ greedy: }
\begin{algorithm}[th!]
   \caption{$\epsilon$ greedy}
   \label{algorithm:EG}
\begin{algorithmic}[1]
    \STATE {\bfseries Input:} protected subspace dimension $s$
    \STATE $t\leftarrow 1$\;
    \STATE $V_i\leftarrow \rho I_d$
    \STATE $l \leftarrow 1$
    \FOR{$t \in [T]$}
    \STATE $flag\leftarrow$ Bernoulli$(\epsilon)$
    \IF{flag}
    \STATE $l \leftarrow (l+1)\pmod L$
    \STATE $I_t\leftarrow $ uniformly random, $A_t \leftarrow$ uniformly random
    \ELSE
    \STATE $l = (l+1) \pmod{L+1}$
    \STATE $P\leftarrow$MLEProjection$(s, \{\hat{\theta}_i\}_{i\in [L]}, \hat{\theta}_0)$
    \STATE $A_t = \argmax\limits_{a\in \mathcal{A}_t} \langle a,  P\hat{\theta_0}\rangle$
    \STATE $I_t = 0$
    \ENDIF
    \STATE Play $(A_t, I_t)$, observe $X_t$\;
    \STATE Update $\Theta_{I_t}$ with $(A_t, X_t)$
    \STATE $V_{I_t}\leftarrow V_{I_t}+A_tA_t^\top$
    \STATE Increment $T_{I_t}$\;
    \ENDFOR
\end{algorithmic}
\end{algorithm}
Here MLEProjection is simply a projection onto the closest $s$ dimensional subspace of $\mathbb{R}^d$ as determined by PCA.

\section{Computing the optimistic parameters}\label{appendix:optimizer}
Let $f(a, P, \theta):=\langle a, \Proj_{\{\theta_i\}_{i\in [L]}}^{\perp}\theta \rangle$. In line 4 of Algorithm \ref{algorithm:protectedLUCB} we must compute 
\begin{equation}\label{eqn:optimistic_optimizer}
(a_t, \overline{P}, \overline{\theta}) = \argmax_{a\in \mathcal{A}_t, P\in \mathcal{P}, \theta^0\in C_{0, t}} f(a, P, \theta^0)
\end{equation}
Note that this is not a concave function. To avoid having to do a grid search, we use a trick that only works exactly if the maximizer is orthogonal to the protected space. This happens, for instance, when the action space is $\mathcal{B}_2^d$ as shown in \ref{lemma:maximizer_perp}.

We can use this observation to construct a lower bound to this function that is tight only for the optimal parameter values. This surrogate can now be used in line 4 of Algorithm \ref{algorithm:protectedLUCB}.
\begin{lemma}
    For any action $a$, we have $\tilde{P}(a) \in \mathcal{P}$, $\tilde{\theta}_0(a) \in \Theta_0$, and $\max_{P\in \mathcal{P}, \theta^0\in \Theta_0} f(a, P, \theta_0) \geq  f(a, \tilde{P}(a), \tilde{\theta_0}(a))$ for
    $$\tilde{\theta}_0(a) :=\hat{\theta}_0+\frac{\sqrt{\beta_{T_{0, t-1}}}a}{||a||_{V_{0, t}}};\quad  \tilde{\theta}_i(a) :=\hat{\theta}_i+(2\alpha_i-1) \frac{\sqrt{\beta_{T_{i, t-1}}}a}{||a||_{V_{i, t}}},$$ where $$\alpha_i = clip_{[0, 1]} \frac{\langle a, \hat{\theta}_i\rangle+\frac{\sqrt{\beta_{T_{i, t-1}}}||a||}{||a||_{V_{i, t}}}}{2\frac{\sqrt{\beta_{T_{i, t-1}}}||a||}{||a||_{V_{i, t}}}}.$$ 
    Moreover, for action $a^*$ that maximizes (\ref{eqn:optimistic_optimizer}), we have $\max_{P\in \mathcal{P}, \theta
    _0\in \Theta_0} f(a^*, P, \theta_0) = f(a^*, \tilde{P}(a^*), \tilde{\theta_0}(a^*))$.
\end{lemma}
\begin{proof}
For any action $a$ the first part hold. For this to be useful, we must first verify that these parameter values are feasible. Indeed,
$$||\theta^0-\hat{\theta}_0||_{V_{0, t-1}}=||\hat{\theta}_0+\frac{\sqrt{\beta_{T_{0, t-1}}}a}{||a||_{V_{0, t}}}-\hat{\theta}_0||_{V_{0, t-1}}=||\frac{\sqrt{\beta_{T_{0, t-1}}}a}{||a||_{V_{0, t}}}||_{V_{0, t-1}}=\sqrt{\beta_{T_{0, t-1}}}$$ and similarly
$$||\theta^i-\hat{\theta}_i||_{V_{i, t-1}}=||\hat{\theta}_i+(2\alpha-1) \frac{\sqrt{\beta_{T_{i, t-1}}}a}{||a||_{V_{i, t}}}-\hat{\theta}_0||_{V_{0, t-1}}=|(2\alpha-1)|||\frac{\sqrt{\beta_{T_{0, t-1}}}a}{||a||_{V_{0, t}}}||_{V_{0, t-1}}\le \sqrt{\beta_{T_{0, t-1}}}$$ for $\alpha\in [0, 1]$. Because this is a feasible point, for a maximization problem, for any action, this is a lower bound to the solution.

At the maximizer $a^*$, we already know that there is a choice of $\tilde{\theta}_i$ for which $\langle a^*, \tilde{\theta}_i\rangle=0$ for all $i\in [L]$. Any other choice of $\tilde{\theta}_i$ that is orthogonal to $a^*$ will result in the same value of $f(a^*, \tilde{P}, \theta_0)$, since $\tilde{\theta}_i$ only acts through a projection collective column space which is orthogonal to $a^*$ anyway. For this choice of $a^*$, and a protected space that is orthogonal to it, the optimal $\tilde{\theta}_0$ is standard.
\end{proof}

\end{document}